\def\eqref#1{equation~\ref{#1}}
\def\1{\bm{1}}
\DeclareMathAlphabet{\mathsfit}{\encodingdefault}{\sfdefault}{m}{sl}
\SetMathAlphabet{\mathsfit}{bold}{\encodingdefault}{\sfdefault}{bx}{n}
\newtheorem{definition}{Definition}[section]
\newtheorem{remark}{Remark}[section]
\newlist{todolist}{itemize}{2}
\setlist[todolist]{label=$\square$}
\newcommand*\circled[1]{\tikz[baseline=(char.base)]{
            \node[shape=circle,draw,inner sep=1pt] (char) {#1};}}
\title{What Are Good Positional Encodings for \\Directed Graphs?}
\author{%
  Yinan~Huang\\
Georgia Institute of Technology\\
  \texttt{yhuang903@gatech.edu} \\
  % examples of more authors
  \And
  Haoyu Wang \\
  Georgia Institute of Technology\\
  \texttt{haoyu.wang@gatech.edu} \\
  \AND
  Pan Li \\
  Georgia Institute of Technology \\
  \texttt{panli@gatech.edu}
  % Address \\
  % \texttt{email} \\
  % \And
  % Coauthor \\
  % Affiliation \\
  % Address \\
  % \texttt{email} \\
  % \And
  % Coauthor \\
  % Affiliation \\
  % Address \\
  % \texttt{email} \\
}
\begin{document}

\maketitle
\begin{abstract}
Positional encodings (PEs) are essential for building powerful and expressive graph neural networks and graph transformers, as they effectively capture the relative spatial relationships between nodes. Although extensive research has been devoted to PEs in undirected graphs, PEs for directed graphs remain relatively unexplored. This work seeks to address this gap. We first introduce the notion of \emph{Walk Profile}, a generalization of walk-counting sequences for directed graphs. A walk profile encompasses numerous structural features crucial for directed graph-relevant applications, such as program analysis and circuit performance prediction. We identify the limitations of existing PE methods in representing walk profiles and propose a novel \emph{Multi-q Magnetic Laplacian PE}, which extends the Magnetic Laplacian eigenvector-based PE by incorporating multiple potential factors. The new PE can provably express walk profiles. Furthermore, we generalize prior basis-invariant neural networks to enable the stable use of the new PE in the complex domain. Our numerical experiments validate the expressiveness of the proposed PEs and demonstrate their effectiveness in solving sorting network satisfiability and performing well on general circuit benchmarks. Our code is available at \url{https://github.com/Graph-COM/Multi-q-Maglap}.
\end{abstract}

\section{Introduction}

Positional encoding (PE)~\citep{vaswani2017attention, su2024roformer}, which refers to the vectorized representation of token positions within a series of tokens, has been widely integrated into modern deep learning models across various data modalities, such as language modeling~\citep{vaswani2017attention}, vision tasks~\citep{dosovitskiy2020image}, and graph learning~\citep{dwivedi2020benchmarking, rampavsek2022recipe}. The key advantage of PE is its ability to preserve crucial positional information of tokens, thereby complementing many downstream position-agnostic models like transformers~\citep{yun2019transformers} and graph neural networks (GNNs)~\citep{li2022expressive, lim2022sign}. For regularly ordered data, such as sequences or images, defining PE is relatively straightforward; for instance, one can use sinusoidal functions of varying frequencies (known as Fourier features) as PE~\citep{vaswani2017attention}. In contrast, designing PE for structured graph data is more challenging due to the lack of canonical node ordering in graphs.

%Transformers~\citep{vaswani2017attention}, originated in sequence modeling, have been broadly adopted into many domains over different modalities. For example, [examples of language, vision, graph, geometric transformers in real-world applications]. Transformers split data into tokens and apply attention mechanism over tokens. To preserve the position information of tokens, a technique called positional encodings~\citep{vaswani2017attention, su2024roformer} is applied, assigning additional features to each token to describe their position in the original sequence, and the choice of positional encodings has significant impact on model performance [reference?].

Designing effective positional encodings for graphs~\citep{dwivedi2020benchmarking, kreuzer2021rethinking, beaini2021directional, lim2022sign, dwivedi2021graph, rampavsek2022recipe, li2020distance, wang2022equivariant} is significant because graph PEs can be used for building powerful graph transformers as well as improving the expressive power of  GNNs~\citep{li2020distance,wang2022equivariant,lim2022sign}. In particular, for undirected graphs, Laplacian positional encoding (Lap-PE)~\citep{dwivedi2020benchmarking, kreuzer2021rethinking}, derived from the eigenvalues and eigenvectors of graph Laplacians, is adopted by many state-of-the-art graph machine learning models~\citep{rampavsek2022recipe, chen2022structure, lim2022sign, huang2023stability}. Lap-PE is powerful as it well preserves the information of the graph structure and can express various distance features defined on undirected graphs: examples such as diffusion distance~\citep{coifman2004diffusion, bronstein2017geometric,wang2024graph}, resistance distance~\citep{xiao2003resistance} and biharmonic distance~\citep{lipman2010biharmonic, kreuzer2021rethinking} are crucial structural features widely used in many graph learning and analysis tasks. %Lap-PE is also well understood in the context of graph spectrum theory~\citep{chung1997spectral}.

Despite the success of Lap-PEs for undirected graphs, many real-world applications involve \textbf{directed graphs}, such as circuit design, program analysis, neural architecture search, citation networks, and financial networks~\citep{wang2020gcn,phothilimthana2024tpugraphs,zhang2019circuit,allamanis2022graph,zhou2019devign,ren2021comprehensive,liu2019link,boginski2005statistical,wen2020neural}. In these graphs, the structural motifs such as bidirectional walks and loops, which consist of edges with different directions often carry critical semantic meanings.  For example, in data-flow analysis, programs can be represented as data-flow graphs~\citep{brauckmann2020compiler, cummins2020deep}. Understanding reachability, liveness, and common subexpressions requires analyzing whether walks between nodes have edges pointing in the same or reverse directions~\citep{cummins2020deep, cummins2021programl}. Logical reasoning often depends on identifying directed relationships, such as common successors ($X\leftarrow Y\to Z$) or predecessors ($X\rightarrow Y\leftarrow Z$) %, both again captured by bidirectional walks
~\citep{qiu2023logical, tian2022knowledge}. Feed-forward loops that involve the two directed walks $X\rightarrow Z$ and $X\to Y\to Z$, are fundamental substructures in many biological systems~\citep{mangan2003structure}. Figure \ref{fig:example} shows some examples. % illustrates these concepts.

However, how to define PEs for directed graphs to capture the above bidirectional relations is still an open question. Most previous works focus on dealing with the asymmetry of Laplacian matrix, and propose, e.g., symmetrized Laplacian PE~\citep{dwivedi2020benchmarking}, singular value decomposition PE (SVD-PE)~\citep{hussain2022global}, Magnetic Laplacian PE (Mag-PE)~\citep{geisler2023transformers}. Nevertheless, these PEs can be shown to be not expressive enough to capture the desired bidirectional relations. In this work, we address this problem by studying more expressive PEs for directed graphs. Our main contributions include: %no work has been done to analyze whether these induced PEs are expressive enough to represent the desired directed relations.  %Potential candidates include symmetrized Laplacian [reference], Jordan decomposition [reference], Singular Value Decomposition [reference] and Magnetic Laplacian [reference]. Nevertheless, there has been no work trying to analyzing whether these existing directed PEs are expressive enough to represent the desired distance information. \pan{A little bit wordy for this paragraph. We can revisit it later.}

% feels like related works
%Accordingly, there have been considerable attempts in designing positional encodings for directed graphs. Jordan decomposition [reference], also known as generalized eigendecomposition, can be applied to asymmetric Laplacian~\citep{singh2016graph}. But the resulting generalized eigenvectors can still be non-orthogonal; Singular Value Decomposition (SVD) is another candidate~\citep{hussain2022global}, as the asymmetric Adjacency matrix can still yield real singular values and corresponding orthogonal basis; finally, Magnetic Laplacian~\citep{shubin1994discrete, furutani2020graph, higuchi1999remark}, as a generalization of Laplacian to directed graphs, have also been recently adopted to design directness-aware positional encodings or spectrum convolution~\citep{geisler2023transformers, he2022msgnn, ma2019spectral, fiorini2023sigmanet}. 

\begin{itemize}[leftmargin=15pt, itemsep=2pt, parsep=2pt, partopsep=1pt, topsep=-3pt]
    \item We formally propose a notion named \emph{(bidirectional) walk profile}, which generalizes undirected walk counting to bidirectional walk counting and can be used to represent many important bidirectional relations in directed graphs such as directed shortest (longest) path distance,  common successors/predecessors, feed-forward loops and many more. Walk profile $\Phi(m,k)$ represents the number of length-$m$ bidirectional walks with exact $k$ forward and $m-k$ backward edges. %, which can be used to derive many important directed relations such as directed shortest(longest) path distance,  common successors/predecessors and many more. %, can be represented by walk profile. %\pan{Put the definition of walk profile into the paragraph: Despite the success of ..... Try to use real-world example to motivate the definition of walk profile.}.\yinan{add the definition. I think it is enough for this bullet point.}
    \item We show that symmetrized Laplacian PE, SVD-PE, %\yinan{SVD is not a formal proof but just a interpretation}, 
    and Mag-PE fail to express walk profile %\pan{this statement can be put in the paragraph ``however, how to properly'' to save some space}. \yinan{I still like to put this sentence here to emphasize this as our contribution.}
    To address this problem, we propose Multi-q Magnetic Laplacian PE (Multi-q Mag-PE) that jointly takes eigenvalues and eigenvectors of multiple Magnetic Laplacian with different potential constants $q$. Interestingly, this \textbf{simple adaption} turns out to be extremely \textbf{effective}: it can provably reconstruct walk profiles of the directed graph. The key insight is that $q$ serves as the frequency of phase shift that encodes the counts of the walks with certain numbers of forward and backward edges, and using a certain number of different $q$'s allows full reconstruction of the counts of these bidirectional walks by \textbf{Fourier transform}. Notably, we show that the number of potential q have to be half of the desired walk length for perfect recovery.
    %\pan{The insights should be ``emphasized''. The ``simplicity but effectiveness'' should be emphasized. Interestingly, we may show that ``this simple change'' allows for xxx due to a Fourier transform relation... }
        \item Besides, naive use of complex eigenvectors of Magnetic Laplacian could lead to severe stability issues~\citep{wang2022equivariant}. we introduce the \textbf{first} basis-invariant and stable neural architecture to handle complex eigenvectors. Specifically, we generalize the previous stable PE framework~\citep{huang2023stability} \textbf{from real to complex domain}. The invariant property ensures that two equivalent complex eigenvectors (e.g., differ by a complex basis transformation) have the same representations. Moreover, the stability allows bridging Lap-PE (potential $q=0$) and Mag-PE (potential $q\neq 0$) smoothly. Our experiments show the key role of stable PE architecture to get a good generalization performance.
    %\item We implement PEs from symmetric Laplacian, SVD, Magnetic Laplacian and the proposed multi-q Magnetic Laplacian, equipped with basis-invariant and stable architectures \pan{The first sentence sounds wordy. The paragraph should be written in a more concise way if the space is limited.}. 
    \item Empirical results on synthetic datasets (distance prediction, sorting network satisfiability prediction) demonstrate the stronger power of multi-q Mag-PEs to encode bidirectional relations.  Real-world tasks (analog circuits prediction, high-level synthetic) show the constant performance gain of using multi-q Mag-PEs compared to existing PE methods.
\end{itemize}

\section{Related Works}
%\pan{We need to review directed graph encoders. In particular for transformer models with PE for directed graphs.}

%\pan{Give a quick review on positional encoding for undirected graphs.}

%\pan{Please think if we need more.}

\textbf{Neural networks for directed graphs.} Neural networks for directed graphs can be mainly categorized into three types: spatial GNNs, spectral GNNs, and transformers. Spatial GNNs are those who directly use graph topology as the inductive bias in model design, including bidirectional message passing neural networks~\citep{jaume2019edgnn, wen2020neural, kollias2022directed} for general directed graphs and asynchronous message passing exclusively for directed acyclic graphs~\citep{zhang2019d, dong2022pace, thost2020directed}. Spectral GNNs aim to generalize the concepts of Fourier basis, Fourier transform and the corresponding spectral convolution from undirected graphs to directed graphs. Potential spectral convolution operators include Magnetic Laplacian~\citep{furutani2020graph, he2022msgnn, fiorini2023sigmanet, zhang2021magnet}, Jordan decomposition~\citep{singh2016graph}, Perron eigenvectors~\citep{ma2019spectral}, 
motif Laplacian~\citep{monti2018motifnet}, Proximity matrix~\citep{tong2020directed} and direct generalization by holomorphic functional calculus~\citep{koke2023holonets}. Finally, transformer-based models adopt the attention mechanism and their core ideas are to devise direction-aware positional encodings that better indicate graph structural information~\citep{geisler2023transformers,hussain2022global}, which are reviewed next.

\textbf{Positional encodings for undirected/directed graphs.} Many works focus on designing positional encodings (PE) for undirected graphs, e.g., Laplacian-based PE~\citep{dwivedi2020benchmarking, kreuzer2021rethinking, beaini2021directional, lim2022sign,wang2022equivariant,huang2023stability}, random walk PE~\citep{li2020distance,dwivedi2021graph}. See \citet{rampavsek2022recipe} for a survey and \citet{black2024comparing} for a study that relates different types of PEs. On the other hand, recently there have been efforts to generalize PE to directed graphs. Symmetrized Laplacian symmetrizes the directed graph into an undirected one and applies regular undirected Laplacian PE~\citep{dwivedi2020benchmarking}. Singular vectors of the asymmetric adjacency matrix have also been used as PE~\citep{hussain2022global, wangdirected}, and so have the eigenvectors of the Magnetic Laplacian and bidirectional random walks~\citep{geisler2023transformers}. For directed acyclic graphs, a depth-based PE has been used~\citep{luo2024transformers}.

% The singular vectors of the asymmetric adjacency matrix have been used as PE; Eigenvectors of Magnetic Laplacian and bidirectional random walks have been used as positional encodings to build transformers~\citep{geisler2023transformers}.  For directed acyclic graphs, specifically, a depth-based positional encoding has been used~\citep{luo2024transformers}

%  \citep{hussain2022global} applies

% \citep{geisler2023transformers} adopts eigenvectors of Magnetic Laplacian or bidirectional random walks as PE. 
\section{Preliminaries}
\label{preliminary}
\textbf{Basic notation.} We denote the real domain by $\mathbb{R}$ and the complex domain by $\mathbb{C}$. Bold-face letters such as $\bm{A}$ are used to denote matrices. $\bm{A}^{\dagger}$ denotes the conjugate transpose of $\bm{A}$. We use $i=\sqrt{-1}$ to represent the imaginary unit and $\bm{I}$ for the identity matrix.

\textbf{Directed graphs.} Let $\mathcal{G}=(\mathcal{V}, \mathcal{E})$ be a directed graph, where $\mathcal{V}$ is the node set and $\mathcal{E}\subset \mathcal{V}\times \mathcal{V}$ is the edge set. %``Directed'' means $(u,v)\in \mathcal{E}$ does not imply $(v,u)\in \mathcal{E}$. 
We call $u$ predecessor of $v$ if $(u, v)\in\mathcal{E}$ and call $u$ successor of $v$ if $(v,u)\in\mathcal{E}$. Let $\bm{A}\in\{0, 1\}^{n\times n}$ be the adjacency matrix of a directed graph with $n$ nodes where $\bm{A}_{u,v}=1$ if $(u,v)\in \mathcal{E}$ and $0$ otherwise. Denote $\bm{D}$ as the diagonal node degree matrix where $\bm{D}_{u,u}$ is the in-degree $d_{\text{in},u}$ plus the out-degree $d_{\text{out},u}$ of node $u$. %\pan{I do not think you need to define backward edges. And, walk definition I moved to 4.1} %We denote backward edges as $\tilde{\mathcal{E}}=\{(u,v): (v,u)\in\mathcal{E}\}$ and we call $\mathcal{E}$ forward edges to emphasize the difference when necessary.
%A walk $w=(v_0, v_1, v_2,...)$ is a sequence of nodes that every consecutive two nodes $(v_i, v_{i+1})\in\mathcal{E}$. The length of a walk is the number of nodes it contains minus one.

\textbf{Graph Magnetic Laplacian.}  With a parameter $q\in\mathbb{R}$ called potential, the complex adjacency matrix $\bm{A}_q\in \mathbb{C}^{n\times n}$ is defined by $[\bm{A}_q]_{u,v}=\exp\{i2\pi q(\bm{A}_{u,v}-\bm{A}_{v,u})\}$ if $(u,v)\in \mathcal{E}$ or $(v,u)\in \mathcal{E}$, and $0$ otherwise. $\bm{A}_q$ encodes edge directions via the phases of complex numbers $\exp\{\pm i2\pi q\}$. \emph{Magnetic Laplacian} 
$\bm{L}_q$ is correspondingly defined by $\bm{L}_q=\bm{I}-\bm{D}^{-1/2}\bm{A}_q\bm{D}^{-1/2}$.
%\begin{equation}
%    [\bm{A}_q]_{u,v} = \left\{ \begin{aligned}
%        \exp\{i2\pi q\}&,&\text{if } (u,v)\in \mathcal{E},\\ 
         %\exp\{-i2\pi q\}&,&\text{if } (v,u)\in \mathcal{E},\\ 
         %1 &, & \text{if } (u,v), (v,u)\in\mathcal{E},
    %\end{aligned} \right.
%\end{equation}
 For a weighted graph $\bm{A}\in\mathbb{R}^{n\times n}$, we can instead define $[\bm{A}_q]_{u,v}=(\bm{A}_{u,v}+\bm{A}_{v,u})\odot \exp\{i2\pi q(\bm{A}_{u,v}-\bm{A}_{v,u})\}$ and $\bm{L}_q$ thereby. As a generalization of Laplacian to directed graphs, $\bm{L}_q$ is widely studied in applied mathematics, physics and network science for directed graph analysis~\citep{shubin1994discrete, fanuel2017magnetic,fanuel2018magnetic}, and has been recently introduced as a promising way to build graph filters and deep learning models for directed graphs~\citep{furutani2020graph, geisler2023transformers, fiorini2023sigmanet, he2022msgnn}. \emph{Symmetrized Laplacian}, i.e., Laplacian of the undirectized graph, is essentially a special case of Magnetic Laplacian with potential $q=0$ (no phase difference). Magnetic Laplacian $\bm{L}_q$ is hermitian since  $\bm{L}_q^{\dagger}=\bm{L}_q$. This implies that there exists eigendecomposition $\bm{L}_q=\bm{V}\bm{\Lambda}\bm{V}^{\dagger}$, where $\bm{V}\in\mathbb{C}^{n\times n}$ is a unitary matrix, and $\bm{\Lambda}=\text{diag}(\bm{\lambda})$ constitutes of real eigenvalues $\bm{\lambda}\in\mathbb{R}^n$. The Magnetic Laplacian PE of node $u$ is defined by the corresponding row of $\bm{V}$, denoted by $z_{u}=[\bm{V}_{u, :}]^{\top}$.

\section{What Are Good Positional Encodings for Directed Graphs?}

In this section, we are going to study the capability of directional PEs to encode bidirectional relations. We first define a generic notion \emph{Walk Profile}, %that collects bidirectional relational information on directed graphs. 
%Importantly, this notation should be \emph{bidirectional}, i.e., 
which counts the number of walks that consist of forward or backward edges and characterizes the expressive power of PEs to represent walk profiles. As aforementioned, these bidirectional walks are important as they can form many crucial motifs to enable the studies and analysis of directed graphs, such as feed-forward loops in biological systems and common successors/predecessors for logic reasoning. %follow or reversing the edge directions, in order to represent relations such as common successors/predecessors. 
%Meanwhile, it should be able to express distance metrics such as shortest/longest path distances.
%\yinan{make sure that the importance of walk profile (w/o defining it) is emphasized in the introduction.}
\subsection{Walk Profile: a General Notion that Captures Bidirectional Relations}
On undirected graphs, one way to describe the spatial relation between node $u$ and $v$ is through walk counting sequences at different lengths $\mathcal{W}_{u,v}=(\bm{A}_{u,v}, \bm{A}^2_{u,v}, \bm{A}^{3}_{u,v}, ..., \bm{A}^L_{u,v})$. In particular, suppose $\bm{A}\in\{0,1\}^{n\times n}$, then it is known that $\bm{A}^{\ell}_{u,v}$ counts the number of $\ell$-length walks from $u$ to $v$ and the shortest path distance can be found by $spd_{u,v}=\min\{\ell: \bm{A}^{\ell}_{u,v}>0\}$. For directed graphs, we can adopt the same description with $\bm{A}$ replaced by the asymmetric adjacency matrix. However, this single-directional walk 
could miss important distance information on directed graphs. For example, consider a directed graph with $E=\{(1, 3), (2, 3)\}$, where node $1$ and $2$ share one common predecessor. However, as we cannot reach node $2$ from node $1$, the walk counting sequence from node $2$ to node $1$ is always zero. In this case this single-directional walk fail to capture the relation. 

% \yinan{replace this example by a feed-forward loop} \pan{I suggest to examples to intro and figure caption and give references. We should expect that the readers are fully motivated to directly read the math definitions in this subsection. }

%\pan{Pairing it with specific application examples is important }.\yinan{to find a practical example here}

A key observation is that descriptions of directed motifs/patterns such as common successors/predecessors require knowing both the powers of $\bm{A}$ (walks via forward edges) and that of $\bm{A}^{\top}$ (walks via backward edges), and their combinations $\bm{A}\bm{A}^{\top}$ and $\bm{A}^{\top}\bm{A}$. 
%$\bm{A}\bm{A}^{\top}$ and $\bm{A}^{\top}\bm{A}$ instead of simply power of $\bm{A}$. 
This observation motivates us to define walks in a \textbf{bidirectional} manner as follows. %In other words, one is able to walk using both the forward edges and the backward edges, which is named as \emph{Bidirectional Walk}: %$\bm{A}$ and $\bm{A}^{\top}$ in a walk. 
%Therefore, we introduce bidirectional walks which consider combinations of $\bm{A}$ and $\bm{A}^{\top}$.
\begin{definition}[Bidirectional Walk]
    Let $\mathcal{G}=(\mathcal{V},\mathcal{E})$ be a directed graph. A bidirectional walk $w=(v_0, v_1, v_2,...)$ is a sequence of nodes where every consecutive two nodes form either a forward edge $(v_i, v_{i+1})\in \mathcal{E}$ or a backward edge $(v_{i+1},v_i) \in \mathcal{E}$. The length of a bidirectional walk is the total number of forward edges and backward edges it contains.
    %Let $\tilde{\mathcal{E}}$ be the backward edges. A bidirectional walk is a walk that can use both forward edges $\mathcal{E}$ and backward edges $\tilde{\mathcal{E}}$. The length of a bidirectional walk is the total number of forward edges and backward edges it contains.
\end{definition}

Bidirectional walks contain significantly more information than unidirectional walks. Note that two bidirectional walks of the same length can differ in the number of forward and backward edges. To capture this finer granularity, we introduce the concept of \emph{Walk Profile}, a generalization of walk counting on directed graphs that retains more detailed structural information. %that counts and categorizes bidirectional walks by its number of forward edges and backward edges.

\begin{figure}[t!]
    \centering
    \begin{subfigure}[t!]{0.28\linewidth}
    \includegraphics[scale=0.25]{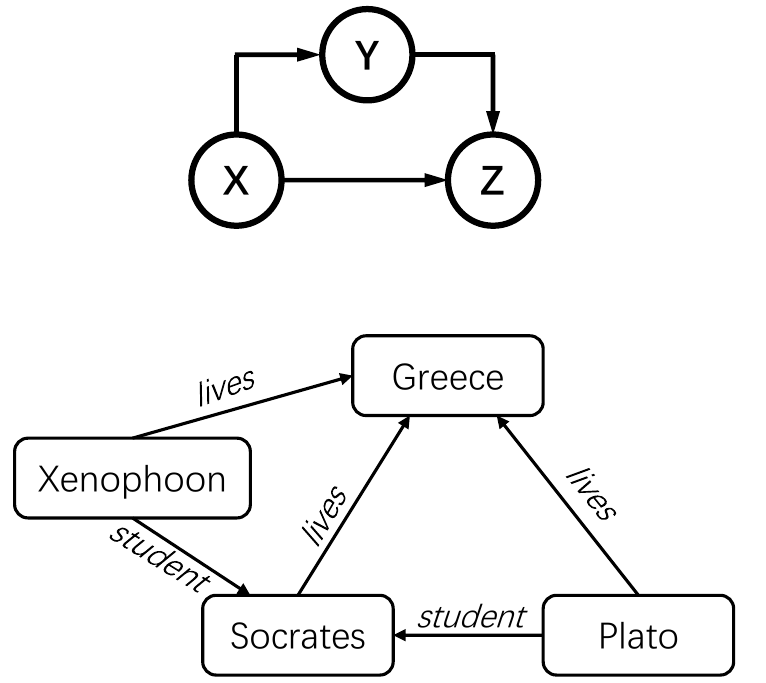}
    \caption{%Upper: critical path (green) as the longest path. Lower: logical reasoning}
    Upper: Feed-forward loop. Lower: logical reasoning}
    \end{subfigure}
\hspace{0.05 \linewidth}
    \begin{subfigure}[t!]{0.6\linewidth}
    \includegraphics[scale=0.28]{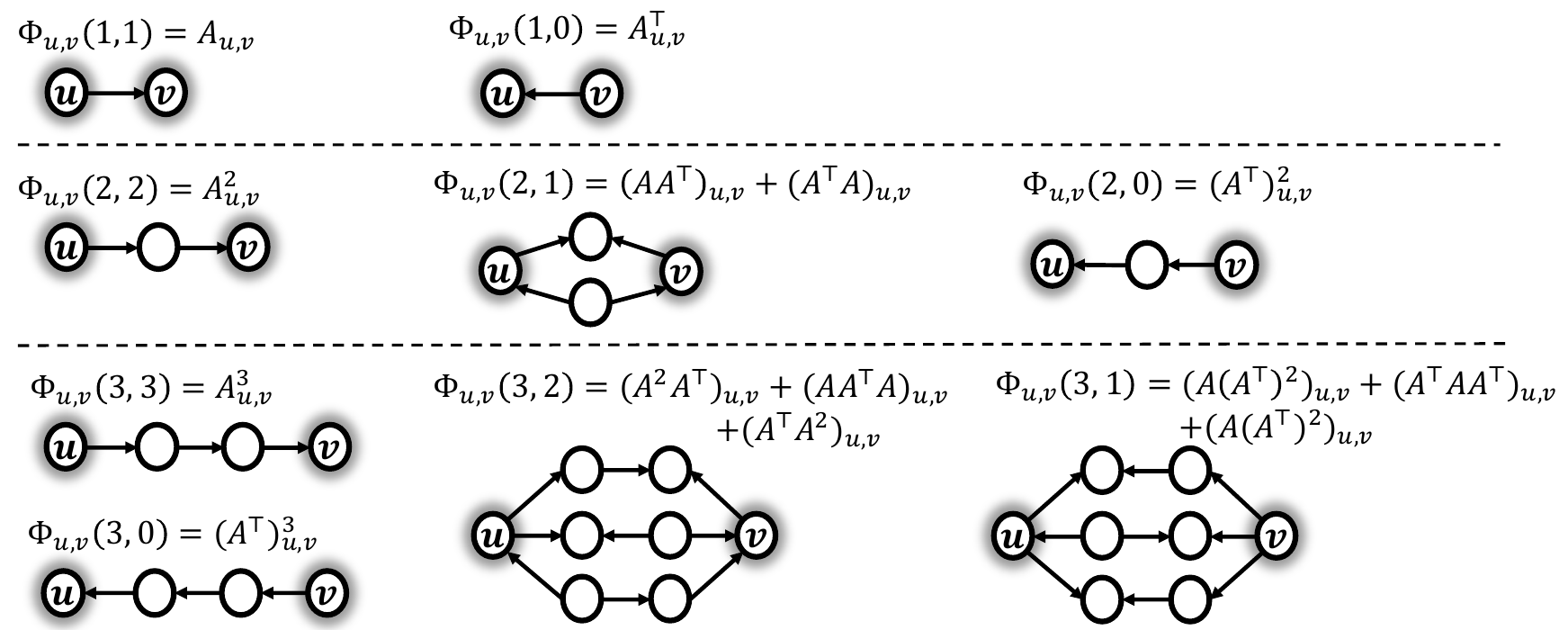} 
    \caption{Examples of \emph{Walk Profile} from node $u$ to $v$.}
    \end{subfigure}
    \caption{Examples of real-world directed motifs/patterns (left) and walk profile (right).\label{fig:example}}
\end{figure}
\begin{definition}[Walk Profile]
   Let $\mathcal{G}$ be a directed graph and $\bm{A}$ be the adjacency matrix. Given two nodes $u,v$, Walk profile $\Phi_{u,v}(\ell, k)$ is the number of length-$\ell$ bidirectional walks from $u$ to $v$ that contains exact $k$ forward edges and $\ell -k$ backward edges. %Mathematically,
   %\yinan{maybe, we just omit the math equation... I think the descriptive definition + examples are pretty clear, while this equation only makes it confusing.}
   %\begin{equation}
   %    \Phi_{u,v}(\ell, k) = \sum_{\substack{\bm{A}_i\in\{\bm{A}, \bm{A}^{\top}\}, \,k \text{ of } \bm{A}_1,...,\bm{A}_{\ell} \text{ is } \bm{A}}}[\bm{A}_1\bm{A}_2...,\bm{A}_{\ell}]_{u,v}
   %\end{equation}
   %\begin{equation}
   %    \Phi_{u,v}(\ell, k)=\sum_{\substack{\text{bi-walk} (u, w_1,...,w_{\ell -1}, v)\\\text{of length }\ell \text{ and } k \text{ forward edges}}}[\bm{A}_1]_{u,w_1}[\bm{A}_2]_{w_1, w_2}...[\bm{A}_\ell]_{w_{\ell}, v}
   %\end{equation}
   
\end{definition}
For example, $\Phi_{u,v}(1, 1)=\bm{A}_{u,v}$ represents the connectivity from $u$ to $v$; $\Phi_{u,v}(2, 1)=(\bm{A}\bm{A}^{\top})_{u,v}+(\bm{A}^{\top}\bm{A})_{u,v}$, which counts the number $2$-length walk with one forward edge and one backward edge, which corresponds to the number of common successors or predecessors. One can also compute the shortest/longest path distance using the walk profile. See Figure \ref{fig:example}(b) for illustrations. Note that the definition of walk profile can be generalized to weighted graphs by replacing adjacency matrix with weight matrix $\bm{A}\in\mathbb{R}^{n\times n}$. %or $\bm{A}\in\mathbb{C}^{n\times n}$.
\begin{remark}
    Let $\mathcal{G}$ be a directed graph with Adjacency matrix $\bm{A}\in\{0, 1\}^{n\times n}$, and consider two nodes $u,v$. The shortest path distance can be computed via 
        $spd_{u,v}=\min\{\ell\in\mathbb{Z}: \Phi_{u,v}(\ell, \ell)>0\}.$
    Furthermore, if $\mathcal{G}$ is acyclic, then longest path distance is
        $lpd_{u,v}=\max\{\ell\in\mathbb{Z}: \Phi_{u,v}(\ell, \ell)>0\}.$
\end{remark}

%\yinan{shall we add more explanation to walk profile here, such as why we choose this specific addition form? (1) compared to one-directional walk, it of courses can capture more directed motifs; (2) inherently related to Magnetic adjacency matrix, good to study Magnetic PE}
%\pan{Not here but to introduction and figure caption.}

%Given a walk length $\ell$, the most general way of counting bidirectional walks is to consider all possible combinations of $\bm{A}$ and $\bm{A}^{\top}$. For example,  for a bidirectional walk of length $3$, one may denote $\mathcal{W}^{(3)}_{u,v}=[\bm{A}^3, \bm{A}^2\bm{A}^{\top}, \bm{A}\bm{A}^{\top}\bm{A}, \bm{A}(\bm{A}^{\top})^2, \bm{A}^{\top}\bm{A}^2, \bm{A}^{\top}\bm{A}\bm{A}^{\top}, (\bm{A}^{\top})^2\bm{A}, (\bm{A}^{\top})^3]$. One collect all $\mathcal{W}^{(\ell)}$ and form the final bidirectional walk counting $\mathcal{W}^{\text{bi}}_{u,v}:=(\mathcal{W}^{(1)}_{u,v}, \mathcal{W}^{(2)}_{u,v}, ...)$. Nevertheless, the dimension of this walk counting grows exponentially, which may not be idea in practice \yinan{we may exclude all this discussion and directly introduce our definition?}. Instead, 

\subsection{The Limitations of Existing PEs to Express Walk Profile}
Intuitively, a good PE can characterize certain notations of distances between nodes $u$ and $v$ based on PEs $z_u, z_v$. Now let us study the power of the existing PEs for directed graphs via the notion of walk profile. Formally, we say a PE method is expressive if it can \textbf{determine $\Phi_{u,v}$ based on $z_u$ and $z_v$}.

 \textbf{Mag-PE}. Let $\bm{L}_q=\bm{I}-\bm{D}^{-1/2}\bm{A}_q\bm{D}^{-1/2}$ be Magnetic Laplacian with potential $q$. Since Magnetic graph $\bm{A}_q$ faithfully represents the directed structure of the graph, one may conjecture that eigenvalues $\lambda$ and eigenvectors $z_u, z_v$ should be able to compute the walk profile. However, it turns out that it is impossible to recover $\Phi_{u,v}(m,k)$ from them, as shown in the following Theorem \ref{theorem:failure-walk-profile}. 

%\yinan{going to modify the statement a little bit after I complete the whole proof in continuous domain.}

\begin{restatable}[]{thm}{thmwalkprofile}
\label{theorem:failure-walk-profile}
 Fix a $q\in\mathbb{R}$. There exist graphs $\mathcal{G},\mathcal{G}^{\prime}$ with adjacency matrices $\bm{A},\bm{A}^{\prime}\in\mathbb{R}^{n\times n}$, and nodes $u,v\in V_{\mathcal{G}}$ and $u^{\prime}, v^{\prime}\in V_{\mathcal{G}^{\prime}}$, such that Mag-PE $(\lambda, z_u, z_v)=(\lambda^{\prime}, z_{u^{\prime}}^{\prime}, z_{v^{\prime}}^{\prime})$, but $\Phi_{u,v}(m,k)\neq \Phi_{u^{\prime},v^{\prime}}^{\prime}(m,k)$ for some $m,k$.%\yinan{one problem, technically we prove for the PE of $A_q$, not $L_q$. They do not share the same spectrum.} \pan{Have this problem } \pan{I do not think this is a problem.} \yinan{I change it to ``PE of $A_q$'' for mathematical correctness. But how do we fill in the gap between Mag-PE and $A_q$'s PE?}
\end{restatable}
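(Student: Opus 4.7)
The plan is to reduce the theorem to an impossibility statement about recovering a full walk profile from a single Fourier mode, and then exhibit an explicit small pair of graphs realizing the gap.

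First, I would note that the Mag-PE tuple $(\bm{\lambda}, z_u, z_v)$ determines every quantity of the form $[f(\bm{L}_q)]_{u,v} = \sum_j f(\lambda_j)(z_u)_j \overline{(z_v)_j}$, and in particular $[\bm{L}_q^m]_{u,v}$ for every $m\ge 0$. Expanding $\bm{L}_q^m = (\bm{I}-\bm{D}^{-1/2}\bm{A}_q\bm{D}^{-1/2})^m$ and using $[\bm{A}_q]_{x,y}=e^{i2\pi q(\bm{A}_{x,y}-\bm{A}_{y,x})}$, each bidirectional walk of length $m$ from $u$ to $v$ with $k$ forward and $m-k$ backward edges contributes the phase $e^{i 2\pi q(2k-m)}$ together with a real degree-dependent weight. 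Collecting terms gives
\[
  [\bm{A}_q^{m}]_{u,v} \;=\; \sum_{k=0}^{m}\Phi_{u,v}(m,k)\,e^{i 2\pi q(2k-m)},
\]
so the Mag-PE only encodes a single complex linear functional of the $m+1$ unknowns $\{\Phi_{u,v}(m,k)\}_k$---one Fourier mode rather than the full spectrum.

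Second, this rank-one constraint is generically insufficient to invert: for any $m\ge 2$ there exist distinct non-negative integer vectors producing the same weighted sum. To promote the observation into a graph-level statement I would use the three-vertex gadget on $\{u,w,v\}$: let $\mathcal{G}$ be the directed path $u\to w\to v$ and $\mathcal{G}'$ the common-successor wedge $u\to w\leftarrow v$. Both share the same underlying undirected structure and the same degree sequence, yet $\Phi_{u,v}(2,2)=1$ in $\mathcal{G}$ while $\Phi'_{u,v}(2,1)=1$ in $\mathcal{G}'$. Their Magnetic Laplacians differ only by a diagonal unitary gauge supported at the middle vertex $w$, so the eigenvalues agree and the PE entries at $u$ and $v$ coincide up to the standard per-eigenvector phase freedom inherent to any Hermitian eigendecomposition.

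I expect the main subtlety to be the precise interpretation of the equality $(\bm{\lambda},z_u,z_v)=(\bm{\lambda}',z'_{u'},z'_{v'})$ in the presence of intrinsic ambiguities: per-eigenvector phase freedom, unitary mixing within degenerate eigenspaces, and the diagonal gauge freedom of the Magnetic Laplacian. The cleanest resolution is to regard two PE tuples as equal whenever they differ by these canonical ambiguities---precisely the basis invariance respected by the architectures introduced later in the paper---under which the three-vertex gadget furnishes the counterexample for every $q$. If literal entrywise equality is required, I would either pad the gadget with auxiliary vertices that pin the gauge at $u$ and $v$, or replace it with a Godsil--McKay-style switching pair whose Magnetic Laplacians are genuinely cospectral at $(u,v)$; the Fourier-mode identity above then guarantees that any such pair whose walk profiles disagree yields the claimed counterexample.
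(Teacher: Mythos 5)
The Fourier--mode observation at the start is correct and matches the key algebraic identity the paper also uses (Eq.~2 and its derivation in Theorem~3.2). The problem is with the concrete counterexample. In the gadget you propose, $\mathcal{G}$ is $u\to w\to v$ and $\mathcal{G}'$ is $u\to w\leftarrow v$, and you assert that the two Magnetic Laplacians ``differ only by a diagonal unitary gauge supported at the middle vertex $w$.'' That is not the case. Writing the gauge as $U=\mathrm{diag}(e^{i\alpha_u},e^{i\alpha_w},e^{i\alpha_v})$ and matching $U\bm{A}_q U^{\dagger}=\bm{A}_q'$ entrywise forces $\alpha_u-\alpha_w=0$ from the $(u,w)$ edge and $\alpha_w-\alpha_v=-4\pi q$ from the $(w,v)$ edge, so (up to an overall phase) the gauge is $U=\mathrm{diag}(1,1,e^{i4\pi q})$, supported at the \emph{endpoint} $v$, not at $w$. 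Consequently $z'_u=z_u$ but $z'_v=e^{i4\pi q}z_v$; and this mismatch cannot be removed by per-eigenvector phase freedom, because any eigenvector rephasing that fixes $z'_u$ componentwise (assuming nondegenerate spectrum and nonvanishing components) must be trivial, and then it leaves the spurious $e^{i4\pi q}$ factor on $z'_v$. So for generic $q$ this pair does not yield $(\bm{\lambda},z_u,z_v)=(\bm{\lambda}',z'_{u'},z'_{v'})$, and the ``counterexample'' does not satisfy the hypothesis of the theorem.

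Your fallback suggestion---pad the graph so the gauge can be placed away from both $u$ and $v$---is precisely the idea the paper executes, but you leave it entirely unrealized, and it is not a mere cosmetic fix: you must redesign the graph so that the gauged vertex, call it $u_0$, lies on length-$2$ bidirectional walks between $u$ and $v$, otherwise the walk profile $\Phi_{u,v}$ does not actually change. The paper does this by taking $u_0$ adjacent (via directed edges) to every other vertex and applying the gauge $U_{u_0,u_0}=e^{i\theta}$ with $\theta=4\pi q$. This fixes $z_u,z_v$ exactly (they are untouched by the gauge) while perturbing the entries $\bm{A}_{w,u_0}$ and $\bm{A}_{u_0,r}$, hence changing $\Phi_{u,v}(2,2)$. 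Two further points worth flagging: (i) the paper works with \emph{weighted} adjacency matrices $\bm{A},\bm{A}'\in\mathbb{R}^{n\times n}$ precisely because the inverse map $\bm{A}'_q\mapsto\bm{A}'$ generally does not return a $\{0,1\}$ matrix, so any version of your padding argument would have to give up the binary gadgets as well; (ii) relaxing ``PE equality'' to ``equality up to the basis-invariance group'' weakens the theorem rather than proving it, since the statement already quantifies existentially over eigendecompositions, and the paper's construction achieves literal componentwise equality of $(\bm{\lambda},z_u,z_v)$.
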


%\begin{theorem}
%\label{theorem:failure-walk-profile}
%    Fix a $q\in\mathbb{R}$. There exist graphs $\mathcal{G},\mathcal{G}^{\prime}$ with adjacency matrices $\bm{A},\bm{A}^{\prime}\in\mathbb{C}^{n\times n}$, and nodes $u,v\in V_{\mathcal{G}}$ and $u^{\prime}, v^{\prime}\in V_{\mathcal{G}^{\prime}}$, such that Mag-PE $(\lambda, z_u, z_v)=(\lambda^{\prime}, z_{u^{\prime}}^{\prime}, z_{v^{\prime}}^{\prime})$, but $\Phi_{u,v}(m,k)\neq \Phi_{u^{\prime},v^{\prime}}^{\prime}(m,k)$ for some $m,k$.
%\end{theorem}

\begin{remark}
    From the formal proof of Theorem \ref{theorem:failure-walk-profile} (see Appendix \ref{appendix-proofs}) we can also show that Mag-PE of node $u,v$ is unable to compute shortest path distance $spd_{u,v}$. More insights into why single $q$ may fail are to be discussed after the proof sketch of Theorem \ref{theorem:multi-q}. Besides, as symmetrized Laplacian can be seen as a special case of Magnetic Laplacian ($q=0$), the same negative results also apply for symmetrized Laplacian PE.
\end{remark}

%\textbf{Singular value decomposition (SVD).} 

\textbf{SVD-PE.} Singular vectors of the asymmetric adjacency matrix may also be used as directed PE~\citep{hussain2022global}. That is, $\bm{A}=\bm{U}\text{diag}(\bm{\sigma})\bm{W}^{\top}$ and define SVD-PE $z_{u}:=(\bm{U}_{u,:}, \bm{W}_{u, :})^\top$. Here, we provide an intuition on why SVD-PE is hard to construct the walk profile. Recall that in eigen-decomposition, a power series of $\bm{A}$ can be computed via power series of eigenvalues, i.e., $f(\bm{A})=\sum_{p}a_p\bm{A}^p=\bm{V}\text{diag}[f(\bm{\lambda})]\bm{V}^{\top}$, which explains that the distance in the form of $f(\bm{A})_{u,v}$ can be expressed by $\langle \bm{V}_{u,:}, f(\bm{\lambda}) \odot \bm{V}_{v,:}\rangle$. %two-point PE $\sum_{k}f(\lambda_k)\bm{V}_{u,k}\bm{V}_{v,k}$. %This property is also the cornerstone of graph signal processing. 
In contrast, this property does not hold for SVD. For instance, $\bm{A}^2= \bm{U}\text{diag}(\sigma)\bm{W}^{\top}\bm{U}\text{diag}(\sigma)\bm{W}^{\top}\neq \bm{U}\text{diag}(\sigma^2)\bm{W}^{\top}$. As a result, %it is hard to reconstruct 
computation of $[\bm{A}^2]_{u,v}$ (the walk profile more broadly) requires not only SVD-PEs $z_u, z_v$ but also $z_w$ for some $w\in \mathcal{V}/\{u,v\}$.

\subsection{Magnetic Laplacian with Multiple Potentials $q$}
%\pan{prove how maglap with multiple q can reconstruct walk profile?}

The limited expressivity of existing PE methods motivates us to design a more powerful direction-aware PE. The limitation of Mag-PE comes from the fact that the accumulated phase shifts over all bidirectional walks by Magnetic Laplacian cannot uniquely determine the number of walks. For example, suppose $q=1/8$, both of the following cases - (1) node $v$ is not reachable from $u$ - v.s. - (2) $u\rightarrow u_1\rightarrow v$ and $u\leftarrow u_2\leftarrow v$ - yield the same $[\bm{L}_q^\ell]_{u,v}=0$ for any integer $\ell$. However, the numbers of length-$\ell$ walks between $u$ and $v$ are different. The potential $q$ acts like a frequency that records the accumulated phase shift for a walk, and one single frequency $q$ cannot faithfully decode the distance.

Based on the frequency intepretation, we propose \emph{Multi-q Magnetic Laplacian PE (Multi-q Mag-PE)}, which leverages multiple Magnetic Laplacians with different $q$'s simultaneously. A $q$ vector denoted by $\vec{q}=(q_1, ..., q_Q)$ is going to construct $Q$ many Magnetic Laplacian $\bm{L}_{q_1}, ..., \bm{L}_{q_Q}$, and Multi-q Mag-PE is defined by concatenation of multiple Mag-PE:
\begin{equation}
    z^{\vec{q}}_u=([\bm{V}_{q_1}]_{u, :}, ..., [\bm{V}_{q_Q}]_{u, :})^{\top},
\end{equation}
where $\bm{V}_{q_i}$ is the eigenvectors of $\bm{L}_{q_i}$.
Intuitively, different frequencies (potential $q$) give a spectrum of phase shifts that allows to decode spatial distances in a lossless manner. 

Indeed, despite the simplicity of the above extension, the following Theorem \ref{theorem:multi-q} demonstrates that this simple extension can be rather effective: with a proper number of $q$'s, one is able to exactly compute the walk profile of the desired length.  Section \ref{exp-dist} also provides extensive empirical justifications.

\begin{restatable}[]{thm}{thmmultiq}
    Let $L$ be a positive integer and let $Q=\lceil \frac{L}{2}\rceil+1$, where $\lceil \cdot\rceil$ means ceiling. If we let $\vec{q}=(q_1, q_2, ..., q_{Q})$ with $Q$ distinct $q$'s  and $q_1,...,q_{L+1}\in [0,\frac{1}{4})$, then for all $\ell\le L$ and $k\le \ell$, walk profile $\Phi_{u,v}(\ell, k)$ can be exactly computed from $(\bm{\lambda}^{\vec{q}}, z^{\vec{q}}_{u}, z^{\vec{q}}_v)$, where $\bm{\lambda}^{\vec{q}}, z^{\vec{q}}$ are concatenation of eigenvalues/eigenvectors of different $q$ from $\vec{q}$.
    \label{theorem:multi-q}
\end{restatable}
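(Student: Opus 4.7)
The plan is to express the walk profile as the inverse Fourier transform of the entries $[\bm{A}_q^\ell]_{u,v}$ across different potentials $q$, and to show that $Q=\lceil L/2\rceil+1$ distinct potentials in $[0,1/4)$ suffice to carry out the inversion.

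First, I would expand $\bm{A}_q^\ell$ along all length-$\ell$ bidirectional walks from $u$ to $v$. Each walk with $k$ forward and $\ell-k$ backward edges accumulates phase $e^{i2\pi q(k-(\ell-k))}$, yielding the key identity
\[
[\bm{A}_q^\ell]_{u,v}\;=\;\sum_{k=0}^{\ell}\Phi_{u,v}(\ell,k)\,e^{i2\pi q(2k-\ell)}.
\]
The left-hand side is determined by the multi-$q$ PE: writing $\bm{M}_q=\bm{I}-\bm{L}_q=\bm{V}_q(\bm{I}-\bm{\Lambda}_q)\bm{V}_q^{\dagger}$, the entry $[\bm{M}_q^\ell]_{u,v}=\sum_j(1-\lambda_j^q)^\ell V_{u,j}^q\overline{V_{v,j}^q}$ is an inner product computable directly from $(\bm{\lambda}^q,z_u^q,z_v^q)$, and the degree factors separating $\bm{A}_q^\ell$ from $\bm{M}_q^\ell$ are recovered from the magnitudes of $\bm{M}_q$ (equivalently, absorbed by rescaling the identity).

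Next, I would exploit the reality of the counts $\Phi_{u,v}(\ell,k)$ to halve the effective number of unknowns. Because the involution $k\mapsto\ell-k$ sends the exponent $2k-\ell$ to its negative, the real and imaginary parts of the identity decouple into linear systems in the symmetric combinations $s_k^\ell=\Phi_{u,v}(\ell,k)+\Phi_{u,v}(\ell,\ell-k)$ (at most $\lfloor\ell/2\rfloor+1$ unknowns) and the antisymmetric combinations $a_k^\ell=\Phi_{u,v}(\ell,k)-\Phi_{u,v}(\ell,\ell-k)$ (at most $\lceil\ell/2\rceil$ unknowns), respectively. Both cardinalities are bounded by $Q=\lceil L/2\rceil+1$ for every $\ell\le L$, so $Q$ distinct potentials produce enough equations in principle.

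The hardest step will be verifying that both decoupled systems are actually invertible under the stated choice of $q$'s. Substituting $y_j=\cos(2\pi q_j)$ turns the coefficients $\cos(2\pi q_j m)$ and $\sin(2\pi q_j m)$ into $T_m(y_j)$ and $\sin(2\pi q_j)\,U_{m-1}(y_j)$, evaluations of Chebyshev polynomials of the first and second kinds. Because $q\mapsto\cos(2\pi q)$ is injective on $[0,1/4]$ and the relevant Chebyshev polynomials form a basis of the appropriate parity subspace of polynomials in $y$ (equivalently, after a further substitution $z=y^2$, an ordinary polynomial basis), choosing the $q_j$ distinct in $[0,1/4)$ makes the $y_j$ distinct and renders the resulting Vandermonde--Chebyshev matrices nonsingular. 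The remaining technicalities --- a parity split on $\ell$ and checking that a possible $q_j=0$ still leaves $\ge\lceil\ell/2\rceil$ nontrivial imaginary equations --- are routine bookkeeping. Inverting the two systems for every $\ell\le L$ then recovers all $\Phi_{u,v}(\ell,k)$ with $k\le\ell$, completing the proof.
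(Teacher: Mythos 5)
Your proposal is correct and shares the paper's overall structure: the same key identity $[\bm{A}_q^\ell]_{u,v}=\sum_{k=0}^\ell\Phi_{u,v}(\ell,k)\,e^{i2\pi q(2k-\ell)}$, followed by an argument that reality of $\Phi$ lets $\lceil L/2\rceil+1$ potentials suffice. The execution of the second step, however, differs in a worthwhile way. The paper stays in the complex domain: it observes that $\bm{Y}_{1/2-q}=\bm{Y}_q^*$, augments the frequency set from $\vec q\subset[0,1/4)$ to $(\vec q,\,1/2-\vec q)$, and concludes that the resulting $2Q\times(L+1)$ complex Vandermonde matrix in $e^{i4\pi q}$ has full column rank. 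You instead split the identity into real and imaginary parts, observe that the involution $k\mapsto\ell-k$ decouples the unknowns into the symmetric sums $s_k$ and antisymmetric differences $a_k$, and verify invertibility of the two smaller real systems via a Chebyshev--Vandermonde substitution $y=\cos(2\pi q)$, $z=y^2$. These two arguments are Fourier duals of one another (reflecting the frequency set versus parity-decomposing the unknowns), so they prove the same rank claim; yours is more explicitly constructive and handles the degenerate case $q_j=0$ head-on (that row of the imaginary system vanishes, leaving $\lceil L/2\rceil\ge\lceil\ell/2\rceil$ useful equations). The paper's count $Q'=2Q\ge L+2$ is in fact off by one when $q=0$ is included, since $q=0$ and $q=1/2$ yield identical Vandermonde rows --- though $2Q-1\ge L+1$ still suffices, so the conclusion stands; your accounting avoids that slip. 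One point you should make more carefully: the theorem's PE is built from $\bm L_q$, not $\bm A_q$, so what is directly recoverable from $(\bm\lambda^{\vec q},z_u^{\vec q},z_v^{\vec q})$ is $[(\bm D^{-1/2}\bm A_q\bm D^{-1/2})^\ell]_{u,v}$; your remark about ``absorbing the degree factors'' needs the degree information at the intermediate nodes of a walk, which is not available from $z_u,z_v$ alone. The paper acknowledges this by working with normalized walk profiles (Appendix G) and, in the negative result, explicitly switches to the spectrum of $\bm A_q$ --- so this is a shared imprecision rather than a defect unique to your proof, but worth flagging.
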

\begin{proof}[Proof Sketch]
Fix two nodes $u,v$. As from $(\bm{\lambda}^{\vec{q}}, z^{\vec{q}}_{u}, z^{\vec{q}}_v)$, we can construct $[\bm{A}_q^{\ell}]_{u,v}$ for any $q$. So the question becomes  how we can determine the walk profile $\Phi_{u,v}(\cdot, \cdot)$ from $[\bm{A}_q^{\ell}]_{u,v}$.
By the definition, one can find the following key formula that relates walk profile with $\bm{A}_q^{\ell}$: For any $q$,
\begin{equation}
    [\bm{A}^{\ell}_q]_{u,v}=e^{-i2\pi q\ell}\sum_{k=0}^\ell \Phi_{u,v}(\ell, k)e^{i4\pi qk}.
    \label{wp-A_q}
\end{equation}
%Note that this holds for any $q$.  
Fix the integer $L$ and consider a length-$Q$ list of $q$ denoted by $\vec{q}=(q_1, ..., q_Q)$. Then, by Eq. \ref{wp-A_q}, given $[\bm{A}^{\ell}_{q_1}]_{u,v}, ..., [\bm{A}^{\ell}_{q_Q}]_{u,v}$ for all $l\le L$, solving $\Phi_{u,v}(\ell, k)$ is equivalent to solving a linear system $\bm{F}\bm{\Phi}=\bm{Y}$, where $\bm{F}\in\mathbb{C}^{Q\times (L+1)}$, $\bm{\Phi}\in\mathbb{R}^{(L+1)\times L}$, $\bm{Y}\in\mathbb{C}^{Q\times L}$ and $\bm{F}_{j,m}=\exp\{i4\pi q_jm\}$, $\bm{\Phi}_{j,m}=\Phi_{u,v}(m,j)$ (equals zero if $j>m$), $\bm{Y}_{j,m}=[\bm{A}^{m}_{q_j}]_{u,v}\cdot e^{i2\pi q_jm}$. In order to make the linear system $\bm{F}$ well-posed for uniquely solving the walk profile $\bm{\Phi}$ from $\bm{Y}$, it generally requires $\bm{F}$ to have $Q=L+1$ distinct rows, i.e., $e^{i4\pi q_i}\neq e^{i4\pi q_j}$ for any $i\neq j$, such that Fourier matrix $\bm{F}$ becomes full-rank. Fortunately, as the unknowns $\bm{\Phi}$ lies in real space, the number of rows $Q$ can be reduced to $\lceil L/2\rceil+1$ thanks to the symmetry of Fourier coefficients $\bm{Y}$ of real $\bm{\Phi}$. As a result, we can choose $Q=\lceil L/2\rceil+1$ many distinct qs with $q_i\in[0,\frac{1}{4})$, by which we can construct a full-rank system to solve $\bm{\Phi}$.
%\pan{I suggest our proof claim stops at ``existance of multi-q'' as long as it is full rank. After the proof, we highlight two choices 1) Evenly-spaced q Fourier transform; 2) randomly-spaced q.. } The later can be satisfied when $q_i\in[0,\frac{1}{2})$ and $q_i\neq q_j$ for all $i\neq j$. Indeed, we can let $\vec{q}=(\frac{0}{2(L+1)}, ..., \frac{L}{2(L+1)})$, then $\bm{F}$ matrix becomes \textbf{discrete Fourier transform}, which is a full-rank unitary matrix. It implies we can uniquely determine walk profile $\Phi_{u,v}(\ell, k)$ for $\ell\le L$ based on $\bm{Y}$. Since multi-q Mag-PE enables us to construct $\bm{Y}$, we finish the proof.  
\end{proof}
The proof provides several insights:   %us important insight to understand the role of $q$. 
%When using single $q$, 
(a) determining the walk profile from the powers of $\bm{A}_q$ of a \textbf{single} $q$ is \textbf{ill-posed}; %and thus explain why single-q Magnetic Laplacian cannot determine walk profile. 
(b) in contrast, using $\lceil L/2\rceil +1$ many $q$'s simultaneously makes the problem well-posed and allows it to uniquely determine the walk profile of length $\leq L$. The number of $q$ can be chosen based on the walk length of interest for a specific task; (c) the capability of expressing the walk profile is robust to the values of multiple $q$'s. In practice, we may choose evenly-spaced $\vec{q}=(\frac{0}{2(L+1)}, ..., \frac{\lceil L/2\rceil +1}{2(L+1)})$ where $\bm{F}$ indicates \textbf{discrete Fourier transform}, or randomly sampled $q_i\in[0,\frac{1}{4})$ where $\bm{F}$ is non-singular with high probability.

%is idea is  simple form: (1) it is surprisingly effective in terms of capturing various directed distances and walk profile (empirically validated in Section \ref{exp-dist}); (2) the algorithmic simplicity (multiple eigendecompositions) makes it easier to implement and adapt to other pipelines. \yinan{check this paragraph}.

\textbf{Limitations.} Despite the provable effectiveness of Multi-q Mag-PE, it suffers from some limitations. One is its extra computational overhead: PEs induced by multiple $q$'s require multiple runs of eigenvalue decomposition and introduce the higher PE input dimension. This introduces a trade-off between the expressivity of PEs and computational complexity, which can be tuned in practice by selecting different numbers of $q$'s. % can be viewed as a tunable hyperparameter to balance expressivity-complexity trade-offs. % and can be tuned based on different practical needs. 
%Most importantly, multiple $q$ bring theoretical insights to directed PE despite the extra complexity. 
We provide a runtime comparison with various numbers of $q$'s, see Sec. \ref{sec-runtime}. Typically, the actual runtime of $Q=5$ is about twice of the runtime when $Q=1$. We also discuss the pros and cons of using Multi-q Mag-PE v.s. directly using walk profiles as injected features in Appendix \ref{appendix-wp-features}.

%\yinan{check if this paragraph.}

%\pan{Discuss the limitation of complexity here or at the end of the main text. Highlight that we are aware of the potentially increased complexity but this is more like expressive-power complexity tradeoff, which can be decided based on the practical needs. In the experiment xxx/appendix xxx, we conduct the analysis of the complexity and provide a comparison. }

%\pan{If we have space, briefly discuss why not use walk profile directly. }

%\yinan{I still think this discussion should be put in previous section?}

\subsection{The First Basis-invariant/stable Neural Architecture for Complex Eigenvectors}
\begin{figure}
    \centering
    \includegraphics[scale=0.4]{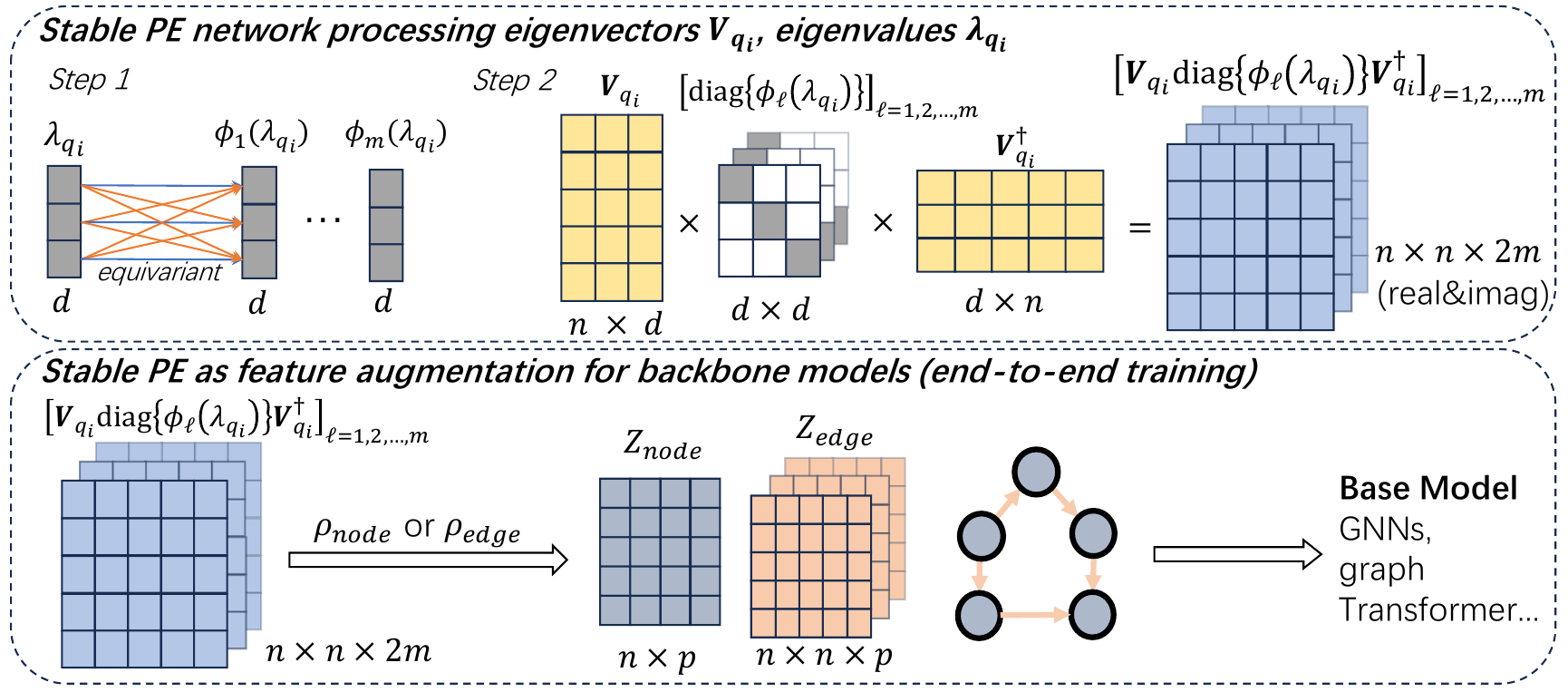}
    \caption{Multi-q Magnetic Laplacian under stable PE framework. Eigenvectors and eigenvalues of each Magnetic Laplacian with different $q$ will be processed independently and identically and concatenated in the end.}
\end{figure}
%\pan{do we want to introduce both edge and node-level stable methods? Expressive architecture?}
%\yinan{A figure showing architecture of this}
%\pan{Highlight the first: I suggest in the caption.}
Recent studies have shown that Lap-PE has the issues of basis ambiguity and instability~\citep{wang2022equivariant, lim2022sign, huang2023stability,black2024comparing}. That is, because eigen-decomposition is not unique (i.e., $\bm{L}=\bm{V}\bm{\Lambda}\bm{V}^{\top}=(\bm{V}\bm{Q})\bm{\Lambda}(\bm{V}\bm{Q})^{\top}$ for some orthogonal matrix $\bm{Q}$), Lap-PE can become completely different for the same Laplacian and tend to be unstable to Laplacian perturbation. %As a result, naive use of Laplacian PE could cause undesired instability of the overall model and leads to poor generalization, while a provably stable way of using PE yields rigorous generalization guarantee~\citep{huang2023stability}.  
%Mag-PE suffers from the same problem. %due to the eigendecomposition. 
In fact, this problem technically becomes even harder for Mag-PEs because Mag-PEs live in the complex domain and the basis ambiguity extends to unitary basis transform: $\bm{L}_q=\bm{V}\bm{\Lambda}\bm{V}^{\dagger}=(\bm{V}\bm{Q})\bm{\Lambda}(\bm{V}\bm{Q})^{\dagger}$ for some unitary matrix $\bm{Q}\in\mathbb{C}^{n\times n}$. For Multi-q Mag-PE, even without duplicated eigenvalues, eigenvectors associated with each $q$ exhibit their own symmetry (if $v$ is an eigenvector, $ve^{i\theta}$ for any $\theta\in (0,2\pi)$ is also an eigenvector), exacerbating the ambiguity and stability issues even further.

To address this problem, we aim to generalize the previous stable PE frameworks, PEG~\citep{wang2022equivariant} and SPE~\citep{huang2023stability}, to handle complex eigenvectors. Our framework %a neural network specifically for PE (we call it PE network in the remaining paper) that 
will process PEs into stable representations and use them as augmented node/edge features in the backbone model. %  before going into the backbone model. The processed PE after PE network will then serve as feature augmentation (e.g., concatenated to node/edge features) as usual.xxxx
%Here we present the first stable PE framework %to handle ambiguity and stability of 
%for complex eigenvectors. 
We first consider Mag-PE of a single $q$ and then extend it.  Specifically, let $\phi^{\text{node}}_{\ell}, \phi^{\text{edge}}_{\ell}:\mathbb{R}^d\to\mathbb{R}^d$ be permutation-equivariant function w.r.t. $d$-dim axis (i.e., equivariant to permutation of eigenvalues), we can construct node-level stable PE $z_{\text{node}}\in\mathbb{R}^{n\times d}$ and/or edge-level stable PE $z_{\text{edge}}\in\mathbb{R}^{n\times n\times d}$: %For $c\in\{\text{node},\text{edge}\}$
\begin{equation}
\begin{split}
    z_{\text{node}} = \rho_{\text{node}}(&\text{Re}\{\bm{V}\text{diag}(\phi^{\text{node}}_1(\lambda))\bm{V}^{\dagger}\}, ..., \text{Re}\{\bm{V}\text{diag}(\phi^{\text{node}}_m(\lambda))\bm{V}^{\dagger}\},\\ &\text{Im}\{\bm{V}\text{diag}(\phi^{\text{node}}_1(\lambda))\bm{V}^{\dagger}\}, ..., \text{Im}\{\bm{V}\text{diag}(\phi^{\text{node}}_m(\lambda))\bm{V}^{\dagger}\}),
\end{split}
    \label{node-level-spe}
\end{equation}
\begin{equation}
\begin{split}
    z_{\text{edge}} = \rho_{\text{edge}}(&\text{Re}\{\bm{V}\text{diag}(\phi^{\text{edge}}_1(\lambda))\bm{V}^{\dagger}\}, ..., \text{Re}\{\bm{V}\text{diag}(\phi^{\text{edge}}_m(\lambda))\bm{V}^{\dagger}\},\\ &\text{Im}\{\bm{V}\text{diag}(\phi^{\text{edge}}_1(\lambda))\bm{V}^{\dagger}\}, ..., \text{Im}\{\bm{V}\text{diag}(\phi^{\text{edge}}_m(\lambda))\bm{V}^{\dagger}\}),
\end{split}
    \label{edge-level-spe}
\end{equation}
where $\bm{V},\bm{\lambda}$ are the eigenvectors and eigenvalues of Magnetic Laplacian with a certain $q$, $\text{Re}\{\cdot\}, \text{Im}\{\cdot\}$ means taking the real and imaginary parts, respectively, and $\rho_{\text{node}}:\mathbb{R}^{n\times n\times 2m}\to\mathbb{R}^{n\times p}$ and $\rho_{\text{edge}}:\mathbb{R}^{n\times n\times 2m}\to\mathbb{R}^{n\times n\times p}$ are permutation equivariant function w.r.t. $n$-dim axis (i.e., equivariant to permutation of node indices). Afterwards, $z_{\text{node}}$ will %serve as node feature augmentation and 
be concatenated with node features, and  $z_{\text{edge}}$ will %serve as edge feature augmentation and 
be concatenated with edge features.

Note that this is the first work to propose the usage of complex PEs in a stable way. In practice, if the backbone model is a GNN, only a portion of (sparse) entries  $[z_{\text{edge}}]_{u,v}$, $(u,v)\in\mathcal{E}$ need to be computed.  For the multi-$q$ case, we apply the same $\phi_{\ell}$ and $\rho$ to the eigenvectors and eigenvalues from different $q$'s and concatenate the outputs. 
A similar proof technique can show that the above stable PE framework can achieve generalization benefits as stated in Proposition 3.1 in~\citet{huang2023stability}. Moreover,
it can be shown that $z_{\text{node}}, z_{\text{edge}}$ are continuous in the choice of $q$ due to the stable structure. Such continuity naturally unifies Lap-PE and Mag-PE, because symmetrized Laplacian $\bm{L}_s$ is a special case of Magnetic Laplacian $\bm{L}_q$ when $q\rightarrow 0$. %It can be shown that $z_{\text{node}}, z_{\text{edge}}$ are continuous in the choice of $q$. All the stable Lap-PE results~\citep{wang2022equivariant,huang2023stability} can be unified in the stable Mag-PE framework.

%With the basis-invariant and stable framework (Eq. \eqref{node-level-spe}, \eqref{edge-level-spe}), the resulting symmetrized Laplacian PE can be seen as a special case of Magnetic Laplacian PE with $q=0$, and as $q\to 0$, Magnetic Laplacian PE smoothly converges to symmetrized Laplacian PE. 

%\pan{it is also useful to sketch exps. Think about what applications to be covered. Can we motivate  some in the introduction?}
\section{Experiments}

%\yinan{A few concerns about experiments that could be raised:}
%\begin{itemize}
%    \item how do we choose the $q$ value for single $q$? to have a fair comparison, we should mention that for single $q$ method, we scan $q$ through the range of multi $q$ and we may just report the best one.  We can show the result of varying $q$ only for distance dataset, and put others in the Appendix. Another idea is to have a separate subsection showing the results.
%
%\pan{good point: we can use one dataset to try multiple q selections to demonstrate the idea. For the other datasets, we can state that we use the best q. The table can be put in the appendix.}
%    
%    \item how do we choose the number of $q$? we could mention that from theory, number of $q$ is generally related with the length of distance we want to encode. This can somehow validated from distance dataset.
%    
%\pan{No need to say in theory. In theory, actually, the proof sketch should be provided. Then, there, we can say intuitively, the q quantity should be xxx. However, in practice, we observed that often 2/3 q's already give good enough performance (actually the smaller q in practice/ in our experiment, the better)}    
%    
%    \item how do we choose the value of $q$ for multiple-$q$? TDA.
%\end{itemize}

In this section, we evaluate the effectiveness of multi-q Mag-PEs by studying the following questions:
\begin{itemize}[leftmargin=15pt, itemsep=2pt, parsep=2pt, partopsep=1pt, topsep=-3pt]
    \item \textbf{Q1}: How good are the previous PEs and our proposed PEs  at expressing directed distances/relations, e.g., directed shortest/longest path distances and the walk profile?
    \item \textbf{Q2}: How do these PE methods perform on practical tasks and real-world datasets?
    \item \textbf{Q3}: What is the impact on using PEs with or without basis-invariant/stable architectures?%\yinan{I think this one may not be supposed to mention here?} \pan{it depends. Let us see the space in the end.}
\end{itemize}

In our experiments, we mainly consider three ways of processing PEs: (1) Naïve: directly concatenates raw PEs with node features; (2) SignNet~\citep{lim2022sign}: makes PEs sign invariant. We adopt the same pipeline as in~\citep{geisler2023transformers}, Figure G.1; (3) SPE (Eqs. \ref{node-level-spe},\ref{edge-level-spe}): we follow \citep{huang2023stability}, and use element-wise MLPs as $\phi_1,...,\phi_m$, GIN~\citep{xu2018powerful} as $\rho_{\text{node}}$ and MLPs as $\rho_{\text{edge}}$. Key hyperparameters are included in the main text while full details of the experiment setup and model configurations can be found in Appendix \ref{appendix-exp-setup}. 

\subsection{Distance Prediction on Directed Graphs}
\label{exp-dist}
\textbf{Datasets.} To answer question \textbf{Q1}, %we test if positonal encodings are able to learn to predict distances on random graphs. 
we follow \citet{geisler2023transformers} and generate Erdős–Rényi random graphs. Specifically, we sample regular directed graphs with average node degree drawn from $\{1, 1.5, 2\}$, or directed acyclic graphs with average node degree from $\{1, 1.5, 2, 2.5, 3\}$. In both cases,  there are 400,000 samples for training and validation (graph size from 16 to 63, training:validation=95:5), and 5,000 samples for test (graph size from 64 to 71). Finally, We take the largest connected component of each generated graph and form our final dataset. The task is to predict the pair-wise distances for node pairs measured by: (1) shortest path distance; (2) longest path distance; (3) walk profile. Only node pairs that are reachable or have non-zero walk profile elements are included for training and test. For the walk profile, we choose to predict the normalized walk profile of length $4$, which is a $5$-dim vector. Normalized walk profiles corresponds to the probability of bidirectional random walks, with adjacency matrix $\bm{A}$ in walk profiles replaced by random walk $\bm{D}_{\text{total}}^{-1}\bm{A}$, as the latter is consistent with the normalized Laplacian $\bm{I}-\bm{D}_{\text{total}}^{-1/2}\bm{A}\bm{D}_{\text{total}}^{-1/2}$ we use in practice. See Appendix \ref{appendix-nwp} for discussions.

\textbf{Models.} To show the power of pure PEs, we take PEs $z_u, z_v$ of node $u,v$ to predict the distance $d_{u,v}$ between $u,v$. We consider three ways of processing PEs: (1) naïve concatenation: $\text{MLP}([z_u, z_v])$; (2) SignNet-based: $\text{MLP}(\text{SignNet}(z_u), \text{SignNet}(z_v))$; (3) edge-SPE-based: $\text{MLP}([z_{\text{edge}}]_{u,v})$ where $z_{\text{edge}}$ is defined in Eq. \ref{edge-level-spe}. For Multi-q Magnetic Laplacian PE, we choose $\vec{q}=(0,\frac{1}{2L}, ..., \frac{L-1}{2L})$ where $L=5$ for predicting walk profile, $L=10$ for predicting shortest/longest path distances on directed acyclic graphs and $L=15$ for predicting shortest/longest path distances on regular directed graphs. For single-q Mag-PE, we tune $q$ from $0$ to $0.5$, and we report both the result of $q=0.1$ and the tuned best $q$. It turns out the performance of single-q Mag-PE in this task is not sensitive to the value of $q$ as long as  not being too large. %See Section \ref{ablation} and Appendix \ref{appendix-q-value} for full results of varying single $q$.

\textbf{Results.} Table \ref{table-dist} indicates several conclusions. (1) Multi-q Mag-PE constantly outperforms other PE methods regardless of how we process the PE. Particularly, it is significantly better than other methods when equipped with SPE; %for instance,  Multi-q Mag-PE with SPE gets test RMSE reduced by $53\%$ on average compared to single-q Mag-PE with SPE. 
(2) \emph{impact of symmetry:} when restricted to the use of naïve concatenation, the overall performance of all PE methods is not desired. This is because naive concatenation cannot handle the basis ambiguity of eigenvectors. It is also worth noticing that SignNet's performance may be even worse than naïve concatenation. This is because the PE processed by SignNet suffers from node ambiguity, which loses pair-wise distance information~\citep{zhang2021labeling,lim2023expressive}. Besides, SignNet is not stable. %, which could lead to a poor generalization error. 
In contrast, when we properly handle the complex eigenvectors by SPE, the true benefit of Mag-PE starts to present. On average, the test RMSE of Multi-q Magnetic Laplacian gets reduced by $66\%$ compared to the one uses naïve concatenation. 

\textbf{Ablation study.} The comparison to single-q Mag-PE naturally serves as an ablation study of Multi-q Mag-PE. One may wonder if Multi-q Mag-PE performs better because it can search and find the best $q$ from $\vec{q}$. We can show that the joint use of different $q$'s is indeed necessary, and it is constantly better than using the single best $q$. %To show this, in all of the experiments we always conduct 
The hyper-parameter search of the best $q$ for single-q Magnetic Laplacian for different tasks can be found in Appendix \ref{appendix-q-value}.

\begin{table}[t!]
\caption{Test RMSE results over 3 random seeds for node-pair distance prediction. }. %\pan{SVD one? node-SPE? node+edge-SPE? Still, the experiments read very limited/ read kind of mean.}\yinan{SVD will be added. This is a link prediction task, so we just have edge-SPE.} }
\label{table-dist}
\centering
\resizebox{\linewidth}{!}{
\begin{tabular}{llllllll}
\Xhline{3\arrayrulewidth}
    &                & \multicolumn{3}{c}{Directed Acyclic Graph}                             & \multicolumn{3}{c}{Regular Directed Graph}                             \\
 PE method     & PE processing                   & \multicolumn{1}{c}{$spd$} & \multicolumn{1}{c}{$lpd$}  & \multicolumn{1}{c}{$wp(4, \cdot)$} & \multicolumn{1}{c}{$spd$} & \multicolumn{1}{c}{$lpd$} & \multicolumn{1}{c}{$wp(4, \cdot)$} \\ \Xhline{2\arrayrulewidth}
\multirow{3}{*}{Lap} & Naive               &    $0.488_{\pm 0.005}$
& $0.727_{\pm 0.005}$   &  $0.370_{\pm 0.004}$                                  &   $2.068_{\pm 0.004}$                        &  $1.898_{\pm 0.001}$     &   $0.480_{\pm 0.000}$                                 \\
 & SignNet              &   $0.537_{\pm 0.013}$                         &  $0.771_{\pm 0.013}$     &   $0.437_{\pm 0.000}$                                 &   $2.064_{\pm 0.004}$                        &  $1.900_{\pm 0.002}$     &   $0.518_{\pm 0.027}$                                 \\
 & SPE                 &     $0.355_{\pm 0.001}$
       & $0.655_{\pm 0.002}$ &$0.326_{\pm 0.001}$ &  $2.066
_{\pm 0.005}$      &  $1.920_{\pm 0.000}$   & $0.452_{\pm 0.001}$                               \\\hline
\multirow{3}{*}{SVD} & Naive               &   $0.649_{\pm 0.002}$ 
&  $0.853_{\pm 0.002}$ &        $0.721_{\pm 0.000}$                           & $2.196_{\pm 0.002}$    & $1.982_{\pm 0.004}$     &   $0.519_{\pm 0.000}$                         \\
 & SignNet     &    $0.673_{\pm 0.003}$                       &  $0.872_{\pm 0.002}$    &   $0.443_{\pm 0.001}$ & $2.229_{\pm 0.003}$      & $1.996_{\pm 0.005}$      & $0.541_{\pm 0.001}$                                \\
 & SPE   & $0.727_{\pm 0.001}$  &$0.912_{\pm 0.001}$ &$0.721_{\pm 0.000}$ &  $2.261_{\pm 0.002}$
     & $2.122_{\pm 0.007}$    &   $0.755_{\pm 0.000}$                           \\\hline
\multirow{3}{*}{MagLap-1q (q=0.1)} & Naive   &     $0.366_{\pm 0.003}$                      & $0.593_{\pm 0.003}$      & $0.241_{\pm 0.010}$    &    $1.826_{\pm 0.005}$                       &  $1.760_{\pm 0.007}$     &     $0.311_{\pm 0.013}$                               \\
 & SignNet &   $0.554_{\pm 0.001}$                        & $0.699_{\pm 0.002}$      &   $0.268_{\pm 0.042}$                                 &   $2.048_{\pm 0.004}$                        &  $1.881_{\pm 0.003}$     &  $0.413_{\pm 0.001}$                                  \\
 & SPE     &  $0.124_{\pm 0.002}$& $0.433_{\pm 0.002}$       &  $0.043_{\pm 0.000}$ &   $1.620_{\pm 0.005
}$  & $1.547_{\pm 0.004}$      &  $0.133_{\pm 0.001}$                                  \\\hline
MagLap-1q (best q) & SPE     &  $0.124_{\pm 0.002}$& $0.432_{\pm 0.004}
$       & $0.040_{\pm 0.001}$   & $1.533
_{\pm 0.007}$   &  $1.493_{\pm 0.003}$     &    $0.132_{\pm 0.000}$                              \\
\Xhline{2.5\arrayrulewidth}
\multirow{3}{*}{MagLap-Multi-q} & Naive  &  $0.353_{\pm 0.003}$                       &  $0.535_{\pm 0.006}$     &    $0.188_{\pm 0.010}$                            &    $1.708_{\pm 0.012}$                  & $1.661_{\pm 0.002}$                        &     $0.257_{\pm 0.007}$                                                           \\
 & SignNet  &   $0.473_{\pm 0.000}$                      &    $0.579_{\pm 0.001}$     &    $0.280_{\pm 0.004}$                               & $1.906_{\pm 0.001}$                      &  $1.784_{\pm 0.008}$     &   $0.377_{\pm 0.007}$                                 \\
 & SPE      &   $\bm{0.016_{\pm 0.000}}$        & $\bm{0.185_{\pm 0.036}}$      &  $\bm{0.002_{\pm 0.000}}$ &            $\bm{0.546_{\pm 0.068}}$                  &   $\bm{1.100_{\pm 0.007}}$  &           $\bm{0.074_{\pm 0.001}}$   
\\\Xhline{3\arrayrulewidth}
\end{tabular}}
\end{table}

\textbf{Robustness of multiple $q$ values.} As implied by Theorem \ref{theorem:multi-q},  the expressivity to compute walk profile is robust to the choice of multiple $q$ values. To verify this, Multi-q Mag-PE with a randomly sampled $\vec{q}$ (draw each $q$ from uniform distribution) is implemented. See details in Appendix \ref{appendix-multiple-q-values}. Our results show that random $\vec{q}$ sometimes achieves even better prediction than evenly-spaced $\vec{q}$. %, which validates our theory.

\begin{figure}[t!]
\begin{minipage}{.4\linewidth}
    \centering
\resizebox{1.1\linewidth}{!}{
\begin{tabular}{llc}
\hline

 PE method    & PE processing & Test F1             \\\hline
 \multirow{3}{*}{Lap}   & Naive & $51.39_{\pm 2.36}$ \\
 
 & SignNet   &   $49.50_{\pm 4.01}$             \\
     & SPE       &    $56.35_{\pm 3.70}$          \\\cline{1-3}
 \multirow{3}{*}{Maglap-1q (best q)} & Naive & $68.68_{\pm 9.66}$ \\
 & SignNet   &  $76.28_{\pm 6.82}$                       \\
    & SPE           &   $86.86_{\pm 3.85}$           \\\cline{1-3} 
  \multirow{3}{*}{Maglap-5q} & Naive & $75.12_{\pm 12.78}$\\
  & SignNet   &       $72.97_{\pm 9.38}$                      \\
  & SPE&    $\bm{91.27_{\pm 0.71}}$             \\\hline
\end{tabular}}
    \captionof{table}{Test F1 scores over 5 random seeds for sorting network satisfiability.\label{table-sorting}} %\pan{SVD one? and explain why not just mlp. Still the results are very mean.\yinan{results to be added in a second}}.\label{table-sorting}}
  \end{minipage}\hfill
  \begin{minipage}{.44\linewidth}
    \centering
    \includegraphics[scale=0.3]{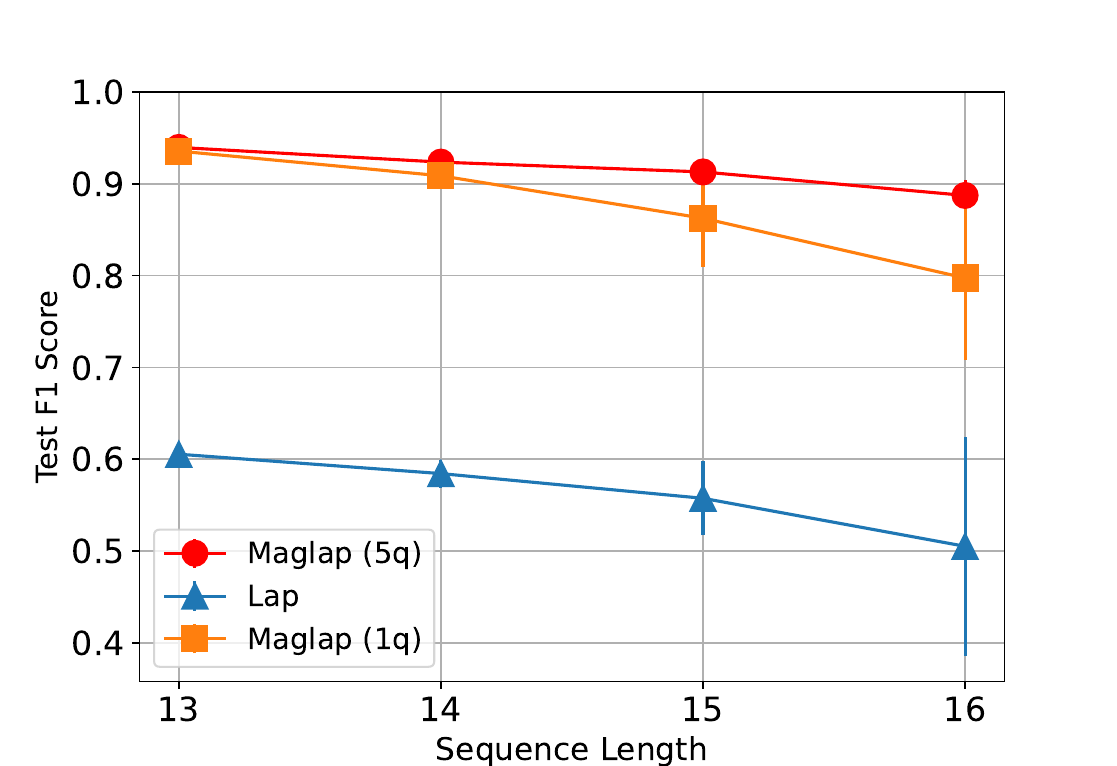}
    \captionof{figure}{Test F1 scores w.r.t. different sorting network lengths.\label{fig-sorting}}% \caption{Figure caption}
  \end{minipage} 
\vspace{-3mm}
\end{figure}

\subsection{Sorting Network Satisfiability}
Sorting network~\citep{knuth1997art} is a comparison-based algorithm designed to perform sorting on a fixed number of variables. Each sorting network consists of a sequence of operations $v_i, v_j = \emph{sorted}(v_i, v_j)$ and we say a sorting network is satisfiable if it can correctly sort an arbitrary input of a given length. It can be parsed into a directed graph for which the direction (i.e., the order of operations) impacts the satisfiability. We test the ability of positional encodings by how well they can predict the satisfiability.

\textbf{Datasets.} We randomly generate sorting networks by following the setup from \citet{geisler2023transformers}. Sorting networks are parsed into directed acyclic graphs whose nodes are comparison operators and directed edges connect two operators that share variables to sort. The dataset contains 800k training samples with a length (the number of variables to sort) from $7$ to $11$, 60k validation samples with a length $12$, and 60k test samples with a length from $13$ to $16$. We perform graph classification to predict satisfiability.

\textbf{Models.} We adopt a vanilla transformer~\citep{vaswani2017attention} as our base model. PEs are either naively concatenated into node features, or using SignNet before concatenation, or concatenated both into node features and attention weights using SPEs Eqs.\ref{node-level-spe} and \ref{edge-level-spe}. For single-q Magnetic Laplacian PE, we tune $q$ from $\frac{1}{20\cdot d_G}$ to $\frac{5}{20\cdot d_G}$, where where $d_G=\max(\min(m, n), 1)$, $m$ is the number of directed edges and $n$ is the number of nodes. We find $q=\frac{5}{20\cdot d_G}$ gives the best results, which is the same $q$ as in~\citet{geisler2023transformers}. For multi-q Mag-PE, we choose $\vec{q}=(\frac{1}{20\cdot d_G}, ...,\frac{5}{20\cdot d_G})$. 

\textbf{Results.} Table \ref{table-sorting} displays the average Test F1 score of classifying satisfiability. Again, Multi-q Mag-PE equipped with SPE achieves the best performance. Mag-PE with SignNet (both single-q and multi-q) performs poorly compared to their SPE counterparts. Figure \ref{fig-sorting} additionally illustrates the test F1 on samples with respect to individual sorting network lengths. Note that although single-q Mag-PE and multi-q Mag-PE perform equally well on length=13 samples, the single-q one generalizes worse on longer-length sorting networks. In contrast, Multi-q Mag-PE has nearly the same generalization performance as the number of variables to sort increases.

\begin{table}[t!]
\centering
\caption{Test results (RMSE for Gain/BW/PM, MSE for DSP/LUT) for Open Circuit Benchmark and High-level Synthesis. For each base model and task, \textbf{Bold} denotes the best result for each base model and task, and \textbf{Bold}$^*$ for the best result that exceeds the second best result by one standard deviation.}
\label{table-amp}
\resizebox{0.93\linewidth}{!}{
\begin{tabular}{lllccccc}
\Xhline{3\arrayrulewidth}

Base Model                        & PE method    & PE processing & Gain            & BW           & PM  & DSP & LUT        \\\hline
\multirow{8}{*}{Undirected-GIN}   & None & N/A &$0.416_{\pm 0.017}$&$4.908_{\pm 0.083}$ & $1.119_{\pm 0.011}$ & $2.827_{\pm 0.206}$ & $2.153_{\pm 0.169}$\\\cline{2-8}
& \multirow{2}{*}{Lap}        & SignNet   &    $0.430_{\pm 0.009}$    & $4.712_{\pm 0.134}$              &   $1.127_{\pm 0.007}$  & $2.665_{\pm 0.184}$ & $2.025_{\pm 0.057}$          \\
&       & SPE       & $0.416_{\pm 0.021}$                & $4.321_{\pm 0.084}$             &    $1.127_{\pm 0.020}$  &  $2.662_{\pm 0.187}$ & $\bm{1.925_{\pm 0.059}}$  \\\cline{2-8}
& \multirow{2}{*}{Maglap-1q (q=0.01)}  & SignNet   & $0.426_{\pm 0.009}$     & $4.670_{\pm 0.113}$              &  $1.116_{\pm	0.009}$ & $2.673_{\pm 0.090}$ & $2.027_{\pm 0.091}$            \\
 &   & SPE       &  $0.405_{\pm 0.016}$               & $4.305_{\pm 0.092}$             &   $1.121_{\pm 0.018}$ & $2.666_{\pm 0.190}$ & $2.024_{\pm 0.068}$           \\\cline{2-8} 
 & Maglap-1q (best q)  & SPE       &  $0.398_{\pm 0.025}$               & $4.281_{\pm 0.085}$             &   $\bm{1.113_{\pm 0.022}}$  & $2.614_{\pm 0.098}$ & $2.010_{\pm 0.082}$         \\\cline{2-8} 
 & \multirow{2}{*}{Maglap-Multi-q} & SignNet   &  $0.421_{\pm 0.015}$   &   $4.743_{\pm 0.215}$           &  $1.126_{\pm 0.011}$   & $2.665_{\pm 0.111}$ & $2.025_{\pm 0.076}$       \\
 & & SPE&  $\bm{0.389_{\pm	0.017}}$               &  $\bm{4.175^*_{\pm 0.115}}$            &    $1.137_{\pm	0.004}$ & $\bm{2.582_{\pm 0.133}}$ & $1.976_{\pm 0.089}$         \\\Xhline{2\arrayrulewidth}
\multirow{7}{*}{Bidirected-GIN} & None & N/A & $0.386_{\pm 0.008}$ & $4.594_{\pm 0.087}$ & $1.123_{\pm 0.010}$ & $2.232_{\pm 0.143}$ & $1.939_{\pm 0.068}$\\\cline{2-8}
& \multirow{2}{*}{Lap}        & SignNet   &  $0.382_{\pm 0.008}$  &$4.371_{\pm 0.171}$              &   $1.127_{\pm 0.021}$ & $2.256_{\pm 0.109}$  & $1.806_{\pm 0.096}$          \\
 &   & SPE   &   $0.391_{\pm 0.007}$ &             $4.153_{\pm0.160}$    &   $1.135_{\pm 0.035}$    & $2.267_{\pm 0.126}$ & $1.786_{\pm 0.072}$       \\\cline{2-8}
                                 & \multirow{2}{*}{Maglap-1q (q=0.01)}&SignNet   &  $0.388_{\pm 0.012}$ &  $4.351_{\pm 0.132}$          & $1.131_{\pm 0.012}$ & $2.304_{\pm 0.143}$ & $1.882_{\pm 0.085}$            \\
&  & SPE  & $0.384_{\pm 0.008}$& $4.152_{\pm 0.056}$    &   $ 1.123_{\pm 0.026}$ & $2.344_{\pm 0.134}$ & $1.830_{\pm 0.116}$        \\\cline{2-8}
& Maglap-1q (best q)  & SPE       &  $0.383_{\pm 0.002}$               & $4.113_{\pm 0.052}$             &   $\bm{1.099_{\pm 0.020}}$  & $2.256_{\pm 0.144}$ & $1.768_{\pm 0.090}$     \\\cline{2-8}
   & \multirow{2}{*}{Maglap-Multi-q} & SignNet   &  $0.381	_{\pm 0.008}$               & $4.443_{\pm 0.116}$             &  $1.119_{\pm 0.016}$      & $2.212_{\pm 0.116}$ & $1.791_{\pm 0.091}$      \\
&& SPE &$\bm{0.371^*_{\pm 0.008}}$            & $\bm{4.051^*_{\pm 0.139}}$   & $1.116_{\pm 0.012}$  & $\bm{2.207_{\pm 0.185}}$ & $\bm{1.735_{\pm 0.096}}$            \\\Xhline{2\arrayrulewidth}

\multirow{7}{*}{SAT (undirected-GIN)} & None & N/A & $0.368_{\pm 0.019}$ & $4.107_{\pm 0.103}$ & $1.077_{\pm 0.032}$ & $3.154_{\pm 0.263}$ & $2.286_{\pm 0.147}$ \\\cline{2-8}

& \multirow{2}{*}{Lap}        & SignNet   & $0.368_{\pm 0.022}$      & $4.085_{\pm 0.189}$    &  $1.038_{\pm 0.016}$    & $3.103_{\pm 0.101}$ & $2.223_{\pm 0.175}$    \\
&       & SPE       &   $0.375_{\pm 0.016}$              &   $4.180_{\pm 0.093}$           &  $1.065_{\pm 0.034}$    & $3.167_{\pm 0.193}$  & $2.425_{\pm 0.168}$  \\\cline{2-8}
& \multirow{2}{*}{Maglap-1q (q=0.01)}  & SignNet   &$0.382_{\pm 0.009}$     &  $4.143_{\pm 0.181}$  & $1.073_{\pm 0.021}$  & $3.087_{\pm 0.183}$ &      $2.214_{\pm 0.150}$      \\
 &   & SPE       &   $0.366_{\pm 0.003}$             & $4.081_{\pm 0.071}$   & $1.089_{\pm 0.023}$    & $3.206_{\pm 0.197}$ & $2.362_{\pm 0.154}$           \\\cline{2-8} 
 & Maglap-1q (best q)  & SPE       & $0.361_{\pm 0.016}$               &    $\bm{4.014_{\pm 0.068}}$       &$1.057_{\pm 0.036}$  &$3.101_{\pm 0.176}$&  $2.362_{\pm 0.154}$      \\\cline{2-8} 
 & \multirow{2}{*}{Maglap-Multi-q} & SignNet   & $0.368_{\pm 0.020}$   &     $4.044_{\pm 0.090}$      & $1.066_{\pm 0.028}$   &$3.121_{\pm 0.143}$  &  $\bm{2.207_{\pm 0.113}}$      \\
 & & SPE&  $\bm{0.350_{\pm 0.004}}$               &$4.044_{\pm 0.153}$      & $\bm{1.035_{\pm 0.025}}$   & $\bm{3.076_{\pm 0.240}}$ & $2.333_{\pm 0.147}$      \\\Xhline{2\arrayrulewidth}
\multirow{7}{*}{SAT (bidirected-GIN)} & None & N/A & $0.392_{\pm 0.036}$ & $4.035_{\pm 0.111}$ & $1.065_{\pm 0.019}$ & $2.724_{\pm 0.158}$ & $2.117_{\pm 0.106}$\\\cline{2-8}
& \multirow{2}{*}{Lap}        & SignNet   &  $0.384_{\pm 0.025}$ & $3.949_{\pm 0.125}$   & $1.069_{\pm 0.029}$    & $\bm{2.569_{\pm 0.116}}$  & $2.048_{\pm 0.088}$          \\
 &   & SPE   &$0.368_{\pm 0.022}$   & $4.024_{\pm 0.106}$ &  $1.046_{\pm 0.021}$   &$2.713_{\pm 0.135}$  & $2.173_{\pm 0.107}$       \\\cline{2-8}
                                 & \multirow{2}{*}{Maglap-1q (q=0.01)}&SignNet   & $0.384_{\pm 0.015}$  &  $4.023_{\pm 0.032}$   & $1.055_{\pm 0.028}$  & $2.616_{\pm 0.120}$& $2.054_{\pm 0.127}$     \\
&  & SPE  & $0.364_{\pm 0.012}$ & $3.996_{\pm 0.178}$    &$1.074_{\pm 0.030}$   & $2.687_{\pm 0.209}$ & $2.192_{\pm 0.135}$       \\\cline{2-8}
& Maglap-1q (best q)  & SPE    & $0.360_{\pm 0.009}$     & $3.960_{\pm 0.060}$    & $1.062_{\pm 0.024}$ &$2.657_{\pm 0.128}$   & $2.107_{\pm 0.135}$       \\\cline{2-8}
   & \multirow{2}{*}{Maglap-Multi-q} & SignNet   &   $0.420_{\pm 0.035}$           & $4.022_{\pm 0.128}$ & $1.089_{\pm 0.046}$      & $2.741_{\pm 0.110}$ & $\bm{2.045_{\pm 0.079}}$    \\
&& SPE &  $\bm{0.359_{\pm 0.008}}$ & $\bm{3.930_{\pm 0.069}}$    & $\bm{1.045_{\pm 0.012}}$   & $2.616_{\pm 0.151}$  &   $2.082_{\pm 0.099}$        \\ \Xhline{3\arrayrulewidth}

\end{tabular}}
\end{table}

\subsection{Circuit Property Prediction}
%We evaluate different PE methods on real-world Circuit Property Prediction tasks. 

 \textbf{Open Circuit Benchmark.} Open Circuit Benchmark~\citep{dong2022cktgnn} contains 10,000 operational amplifiers circuits as directed graphs and the task is to predict the DC gain (Gain), band width (BW) and phase margin (PM) of each circuit. These targets reflect the property of current flows from input nodes to output nodes and thus require a powerful direction-aware model. The dataset consists of 2-stage amplifiers and 3-stage amplifiers and we use 2-stage amplifiers in our experiment. We randomly split them into 0.9:0.05:0.05 as training, validation and test set.

\textbf{High-level Synthesis.} The HLS dataset~\citep{wu2022high} collects $18,750$ intermediate representation (IR) graphs of C/C++ code after front-end compilation~\citep{alfred2007compilers}. It provides post-implementation performance metrics on FPGA devices as labels, which are obtained after hours of synthesis using the Vitis HLS tool~\citep{vitishls} and implementation with Vivado~\citep{vivado}. The task is to predict resource usage, namely look-up table (LUT) and digital signal processor (DSP) usage. We randomly select $16570$ for training, and $1000$ each for validation and testing.

\textbf{Models.} We adopt GIN~\citep{xu2018powerful} as the backbone and implement two variants: (1) undirected-GIN: the normal GIN works on the undirected version of the original directed graphs; (2) bidirectional-GIN: bidirectional message passing with different weights for two directions, inspired by~\citep{jaume2019edgnn, thost2020directed, wen2020neural}. The state-of-the-art graph transformer SAT~\citep{chen2022structure} is also adopted, whose GNN extractor is undirected-GIN or bidirected-GIN as mentioned for self-attention computation. PEs are processed and then concatenated with node features using SignNet or with node and edge features using SPE (Eqs. \ref{node-level-spe}, \ref{edge-level-spe}). We generally choose $\vec{q}=(1, 2, ..., 10)/100$ or $\vec{q}=(1, 2, ..., 5)/100$ for Multi-q Mag-PE (see Appendix \ref{appendix-exp-setup} for specific $q$ for each tasks). For single-q Magnetic Laplacian, we report $q=0.01$ as well as the best results of single $q$ by searching over the range of multiple $q$. See Appendix \ref{appendix-q-value} for full results of varying single $q$.

\textbf{Results.} Table \ref{table-amp} shows the test RMSE (5 random seeds) on Open Circuit Benchmark (Gain, BW, PM) and the test MSE (10 random seeds) on high-level synthesis (DSP, LUT). Notably, Multi-q Mag-PE with Stable PE framework achieves most of the best results for 5 targets compared to other PEs. %It is also interesting to note that SAT has consistent better performance than GIN on Open Circuit Benchmark, but worse performance than GIN on high-level synthesis dataset,   

\subsection{Runtime Evaluation}
\label{sec-runtime}
 Increasing the number of $q$ boosts expressivity but brings extra computational costs. We evaluate the runtime of preprocessing (eigendecomposition), training and inference for various number of $q$, as shown in Figure \ref{fig-runtime}. We can see that the training time of even 10 $q$ is generally about 1.5 to 3 times the training time of one single $q$. The preprocessing time is not a concern as they are negligible compared to total training time for hundreds of epochs in our experiments. See detailed setup at Appendix \ref{appendix-runtime}. 
 \begin{figure}[ht]
\begin{subfigure}{.33\textwidth}
  \centering
  % include first image
  \includegraphics[width=1.1\linewidth]{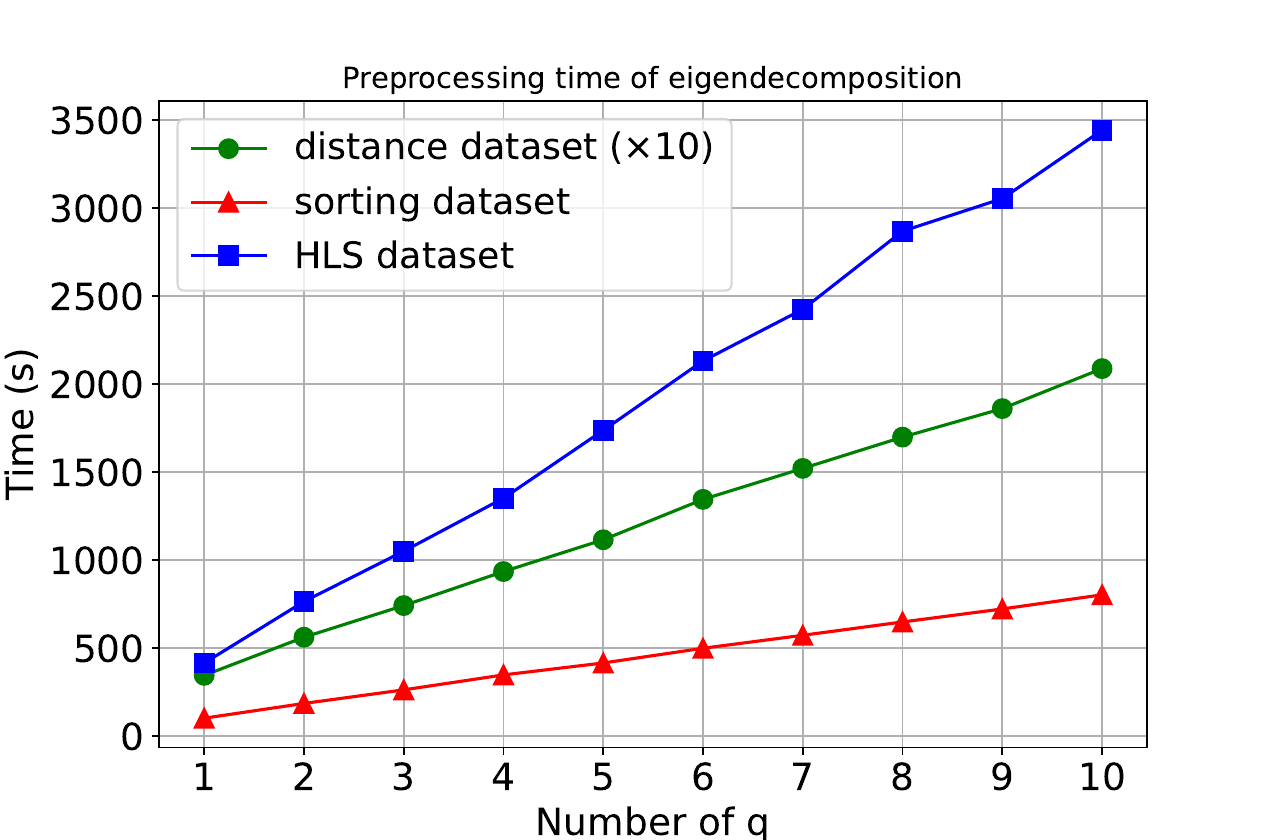}  
  %\caption{Put your sub-caption here}
  %\caption{Preprocessing time.}
\end{subfigure}
\begin{subfigure}{.33\textwidth}
  \centering
  % include second image
  \includegraphics[width=1.1\linewidth]{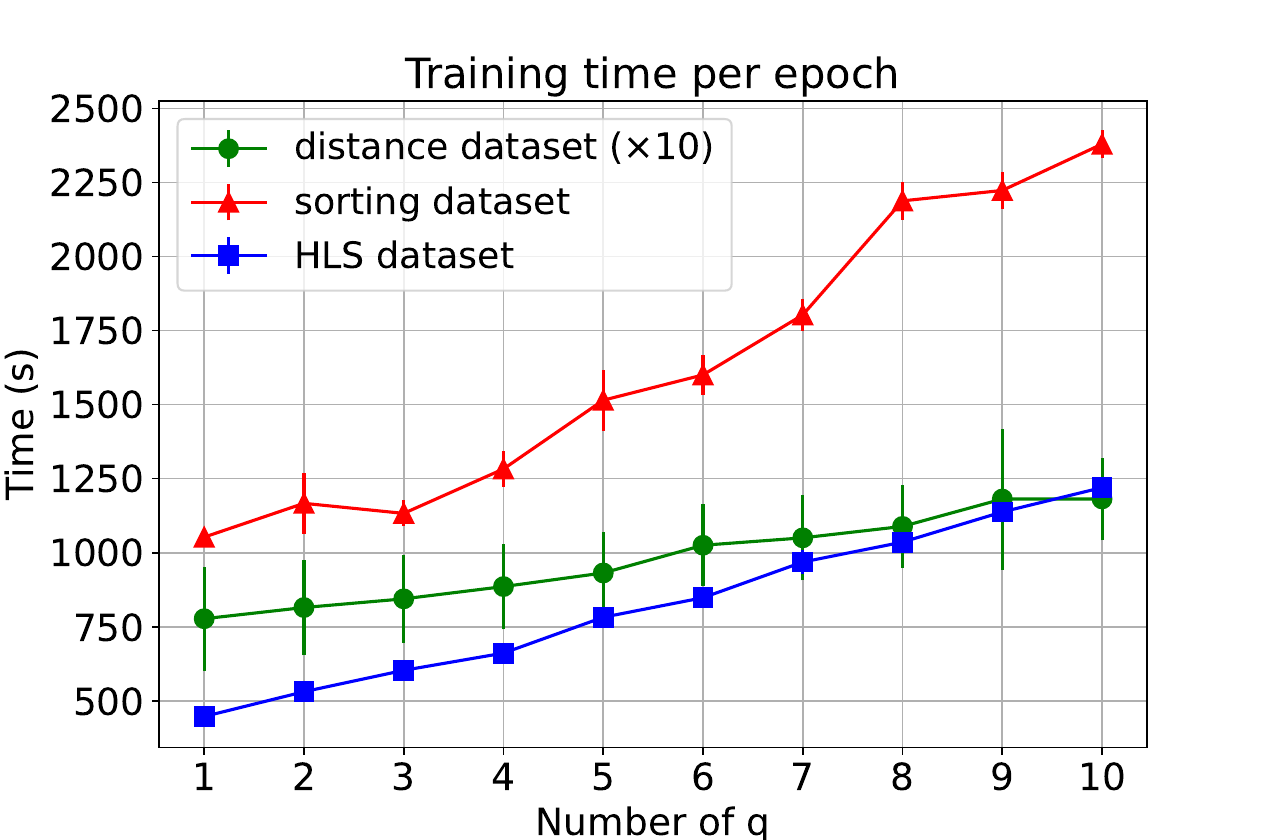}  
  %\caption{Training time per epoch.}
\end{subfigure}
\begin{subfigure}{.33\textwidth}
  \centering
  % include second image
  \includegraphics[width=1.1\linewidth]{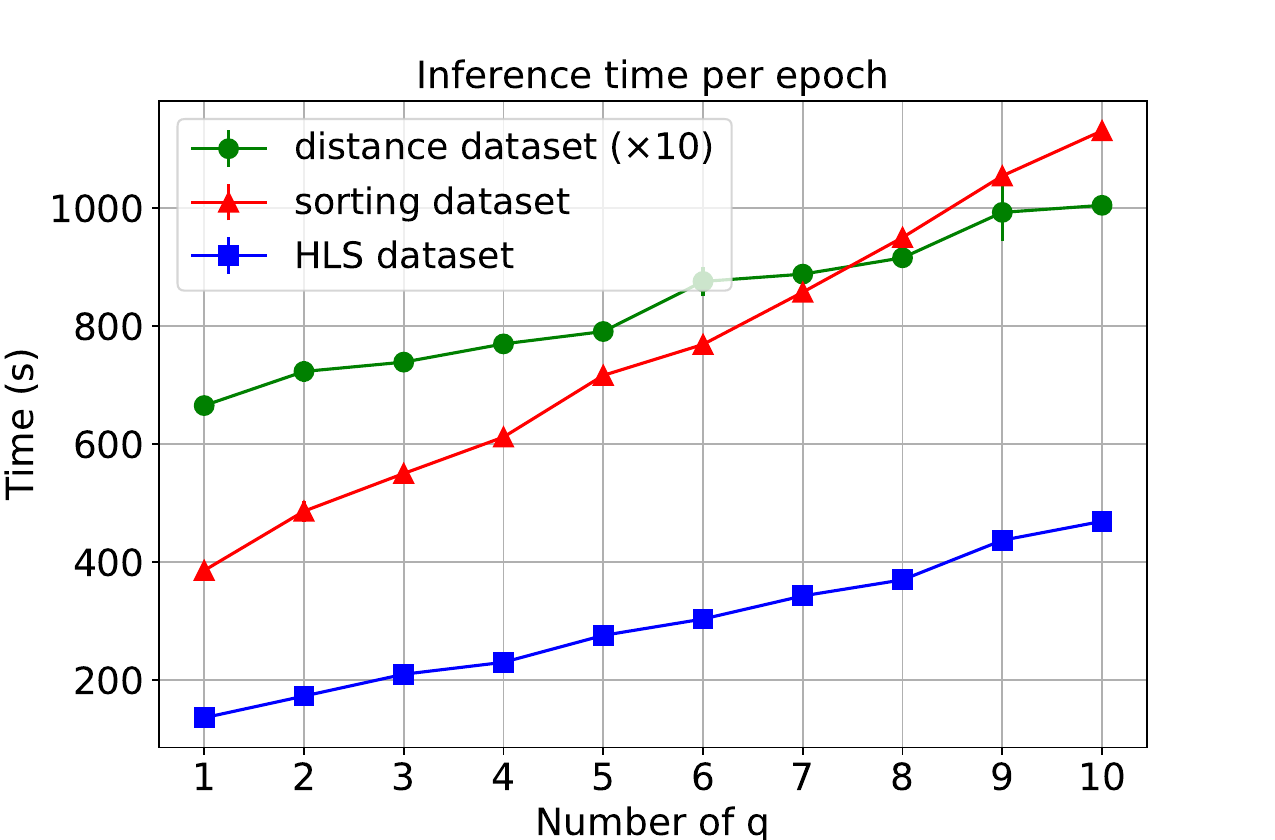}  
  %\caption{Inference time per epoch.}
\end{subfigure}
\caption{Runtime of preprocessing (left), training (middle) and inference (right) on three datasets. The shown runtime of distance dataset is 10 times the actual ones for better illustration. \label{fig-runtime}}
\end{figure}

\vspace{-1em}
\section{Conclusion}
\label{limitations}
This work studies the positional encodings (PEs) for directed graphs. We propose the notion of walk profile to assess the model's ability to encode directed relation. Limitations of existing PEs to express walk profiles are identified. We propose a simple yet effective variant of Magnetic Laplacian PE called Multi-q Mag-PE that can provably compute walk profile. the basis-invariant and stable PE framework is extened to address the basis ambiguity and stability problem of complex eigenvectors. Experiments demonstrate the consistent performance gain of Multi-q Mag-PE.

%One limitation of our work is the overhead of computing and storing multiple eigendecompositions. One possible way to alleviate this is to randomly sample certain number of potential factors to make computation affordable, which can be a future work. \pan{Can be removed as we discussed this point.}

\clearpage
\subsubsection*{Acknowledgments}
This research is supported by the NSF awards PHY-2117997, IIS-2239565, CFF-2402816, IIS-2428777 and JPMorgan Faculty award.

%\clearpage
\bibliographystyle{iclr2025_conference}
\bibliography{BIB/general-background,BIB/positional, BIB/maglap, BIB/dataset, BIB/gnn}

\clearpage
\clearpage
\appendix
\section{Deferred Proofs}
\label{appendix-proofs}

\subsection{Proof of Theorem \ref{theorem:failure-walk-profile}}

\thmwalkprofile*

\begin{proof}

Fix a $q\in\mathbb{R}$ and two nodes $u,v$. Pick and fix anther node $u_0$ that is not $u,v$. Let us define a graph $\bm{A}\in\mathbb{R}^{n\times n}$ such that $\text{diag}(\bm{A})=0$, and for all $w,r\in\mathcal{V}$, $\bm{A}_{u_0, w}=\bm{A}_{u_0, r}=1$ and $\bm{A}_{w,r}=0$ otherwise. for all nodes $w,r$. Note that complex adjacency matrix $\bm{A}_q=(\bm{A}+\bm{A}^{\top})\odot \exp\{i2\pi q(\bm{A}-\bm{A}^{\top})\}$ uniquely determines $\bm{A}$ by the relation
   $\bm{A} = \frac{1}{2}|\bm{A}_q|+\frac{1}{2}\frac{\angle \bm{A}_q}{2\pi q},$
where $|\cdot|, \angle$ denote amplitude and phase of complex numbers.

Now let us construct $\bm{A}^{\prime}$ by constructing $\bm{A}_q^{\prime}$ first. Suppose Hermitian $\bm{A}_q$ has eigenvalue decomposition $[\bm{A}_q]_{w,r}=\sum_k \lambda_kz_{w,k}z_{r,k}^*$. Let us define another group of eigenvalues $\lambda^{\prime}$ and eigenvectors (PE) $z^{\prime}$ via:
\begin{equation}
    \lambda^{\prime}=\lambda, \quad z^{\prime}_{w, :}=\left\{ \begin{aligned}
        z_{w, :}, \quad \text{if } w\neq u_0\\
        e^{i\theta}z_{u_0, :} \quad w=u_0
    \end{aligned}\right.,
\end{equation}
that is, $\lambda^{\prime}, z^{\prime}$ shares the exact same spectrum as $\lambda, z$, except that $z^{\prime}_{u_0, :}=\exp(i\theta)\cdot z_{u_0,:}$. It is easy to verify that $z^{\prime}$ is indeed eigenvectors by their orthonormality, so we can construct Hermitian $\bm{A}_q^{\prime}$ as $[\bm{A}_q^{\prime}]_{w,r}=\sum_k\lambda^{\prime}_kz^{\prime}_{w,k}z^{\prime*}_{r,k}$. From this construction, we shall see that $(\lambda, z_u, z_v)=(\lambda^{\prime}, z_u^{\prime}, z_v^{\prime})$. 

On the other hand, let us examine the walk profile. For any two nodes $w,r$, we have 
\begin{equation}
    [\bm{A}^{\prime}_q]_{w,r} = \sum_{k}\lambda^{\prime}_kz^{\prime}_{w,k}z^{\prime*}_{r,k}=\left\{\begin{aligned}[\bm{A}_q]_{w,r},\quad \text{if }w,r\neq u_0\\
    e^{-i\theta}[\bm{A}_q]_{w, u_0}, \quad \text{if }w\neq u_0, r=u_0\\
    e^{i\theta}[\bm{A}_q]_{u_0, r},\quad \text{if }w=u_0, r\neq u_0\end{aligned}
    \right.
\end{equation}
This determines $\bm{A}^{\prime}$, which reads 
\begin{equation}
    [\bm{A}^{\prime}]_{w,r}=\frac{1}{2}|[\bm{A}^{\prime}_q]_{w,r}|+\frac{1}{2}\frac{\angle [\bm{A}^{\prime}_q]_{w,r}}{2\pi q}=\left\{
    \begin{aligned}
        [\bm{A}]_{w,r}, \quad \text{if }w, r\neq u_0\\
        \frac{1}{2}\cdot\circled{1}, \quad \text{if }w\neq u_0, r=u_0\\
        \frac{1}{2}\cdot\circled{2}, \quad \text{if }w=u_0, r\neq u_0
    \end{aligned}
    \right.,
\end{equation}
where 
\begin{equation}
   \circled{1} = \frac{1}{2}|[\bm{A}_q]_{w,u_0}\exp\{-i\theta\}|+\frac{1}{2}\frac{\angle [\bm{A}^{\prime}_q]_{w,u_0}\exp\{-i\theta\}}{2\pi q}=\bm{A}_{w, u_0}-\frac{\theta}{4\pi q}=-\frac{\theta}{4\pi q},
\end{equation}
and 
\begin{equation}
   \circled{2} = \frac{1}{2}|[\bm{A}_q]_{u_0, r}\exp\{-i\theta\}|+\frac{1}{2}\frac{\angle [\bm{A}^{\prime}_q]_{u_0, r}\exp\{-i\theta\}}{2\pi q}=\bm{A}_{u_0, r}+\frac{\theta}{4\pi q}=1+\frac{\theta}{4\pi q}.
\end{equation}
Now we can see that for the fixed $u,v$, we have
\begin{equation}
    \begin{split}
        \Phi_{u,v}(2, 2) - \Phi^{\prime}_{u,v}(2, 2)&=\bm{A}^2_{u,v}-\bm{A}^{2}_{u,v}=\sum_{w}\bm{A}_{u,w}\bm{A}_{w, v}-\bm{A}^{\prime}_{u, w}\bm{A}^{\prime}_{w, v}\\
        &=\bm{A}_{u, u_0}\bm{A}_{u_0, v}-\bm{A}^{\prime}_{u, u_0}\bm{A}^{\prime}_{u_0, v}=0\cdot 1  -\frac{1}{4}\circled{1}\cdot\circled{2}\\
        &=-\frac{1}{4}\left(1+\frac{\theta}{4\pi q}\right)\frac{\theta}{4\pi q}.
    \end{split}
\end{equation}
Now let $\theta=4\pi q$. Then $\Phi_{u,v}(2, 2) - \Phi^{\prime}_{u,v}(2, 2)=-1/2\neq 0$. So the walk profile $\Phi_{u,v}(2, 2), \Phi^{\prime}_{u,v}(2, 2)$ are different while the PE $(\lambda, z_u, z_v)=(\lambda^{\prime}, z_u^{\prime}, z_v^{\prime})$ are the same\footnote{Note that here PE denotes the eigenvalues/vectors of $\bm{A}_q$ rather than $\bm{L}_q$, as the walk profile of $\bm{A}$ is more naturally correlated to the spectrum of $\bm{A}_q$.}. It is worthy noticing that using the same construction except that let $\bm{A}_{u_0, u_1}=0$, we can show that $spd(u_0, u_1)=0$ but $spd^{\prime}(u_0, u_1)=2$. Therefore, Mag-PE cannot express shortest path distance as well.

\end{proof}

\subsection{Proof of Theorem \ref{theorem:multi-q}}
\thmmultiq*
\begin{proof}
   The proof starts with identifying a key relation between $[\bm{A}_q]_{u,v}^{\ell}$ and $\Phi^{\ell}_{u,v}$. Note that $[\bm{A}_q]_{u,v}^{\ell}$ equals to the sum of weight of all length-$\ell$ bidirectional walks: 
   \begin{equation}
   \begin{split}
       [\bm{A}_q]_{u,v}^{\ell}&=\sum_{w_1, w_2, ..., w_{\ell -1}}[\bm{A}_q]_{u,w_1}[\bm{A}_q]_{w_1, w_2}...[\bm{A}_q]_{w_{\ell}, v}\\
       &=\sum_{\substack{\text{bi-walk } (u,w_1,...,w_{\ell-1}, v) \\\text{ of length } \ell}}[\bm{A}_q]_{u,w_1}[\bm{A}_q]_{w_1, w_2}...[\bm{A}_q]_{w_{\ell}, v}.
    \end{split}
   \end{equation}
   
   It is bidirectional because $\bm{A}_q$ is Hermitian and allows transition along the forward edges with weight $\bm{A}_{u,v}\exp\{i2\pi q\}$ or backward edges with weight $\bm{A}_{u,v}^*\exp\{-i2\pi q\}$. Note that we can categorize the bidirectional walks by their number of forward and reverse edges. Then for bidirectional walk of length $\ell$ and exact $k$ forward edges, the forward edges will cause an additional phase term $\exp\{i2\pi k\}$ while the remaining $\ell -k$ backward edges will cause $\exp\{-i2\pi (\ell-k)\}=\exp\{i2\pi (k-\ell)\}$. So there is a common phase term $\exp\{i2\pi (2k-\ell)\}$ for all bidirectional walks of length $\ell$ and $k$ forward edges. Assume there is no multi-edges, i.e., directed edges $(u,v), (v,u)$ cannot appear simultaneously, now we can re-organize the formula into
   \begin{equation}
   \begin{split}    [\bm{A}_q]_{u,v}^{\ell}&=\sum_{k=0}^{\ell}\sum_{\substack{\text{bi-walk} (u, w_1,...,w_{\ell -1}, v)\\\text{of length }\ell \text{ and } k \text{ forward edges}}}[\bm{A}_q]_{u,w_1}[\bm{A}_q]_{w_1, w_2}...[\bm{A}_q]_{w_{\ell}, v}\\
   &=\sum_{k=0}^{\ell}\sum_{\substack{\text{bi-walk} (u, w_1,...,w_{\ell -1}, v)\\\text{of length }\ell \text{ and } k \text{ forward edges}}}\exp\{i2\pi q(2k-\ell)\}[\bm{A}_1]_{u,w_1}[\bm{A}_2]_{w_1, w_2}...[\bm{A}_\ell]_{w_{\ell}, v},
    \end{split}
   \end{equation}
   where $\bm{A}_1,...,\bm{A}_{\ell}$ is either $\bm{A}$ or $\bm{A}^{\dagger}$, depending on it is a forward or backward edge. Now by taking phase term $\exp\{i2\pi q(2k-\ell)\}$ out the inner sum, we get exactly walk profile:
 \begin{equation}
   \begin{split}    [\bm{A}_q]_{u,v}^{\ell}&=\sum_{k=0}^{\ell}\exp\{i2\pi q (2k-\ell)\}\sum_{\substack{\text{bi-walk} (u, w_1,...,w_{\ell -1}, v)\\\text{of length }\ell \text{ and } k \text{ forward edges}}}[\bm{A}_1]_{u,w_1}[\bm{A}_2]_{w_1, w_2}...[\bm{A}_\ell]_{w_{\ell}, v}\\
   &=\sum_{k=0}^{\ell}\exp\{i2\pi q(2k-\ell)\}\Phi_{u,v}(\ell, k)\\
   &=\exp\{-i2\pi q\ell \}\sum_{k=0}^{\ell}\exp\{i4\pi qk\}\Phi_{u,v}(\ell, k).
    \end{split}
   \end{equation}
    Let $L$ be any positive integer. Let us write the relation above for $\Phi_{u,v}(\ell, \cdot)$ ($\ell\le L$) in a matrix form:
    \begin{equation}
    \bm{F}_{q} \begin{pmatrix}
       \Phi_{u,v}(1, 0) & \Phi_{u,v}(2, 0) & ... & \Phi_{u,v}(L, 0) \\
       \Phi_{u,v}(1, 1) & \Phi_{u,v}(2, 1) & ... & \Phi_{u,v}(L, 1) \\
       0 & \Phi_{u,v}(2, 2) & ... & \Phi_{u,v}(L, 2) \\
       0 & 0 & ... & \Phi_{u,v}(L, 3) \\
       ... & ... & ... & \Phi_{u,v}(L, L) 
    \end{pmatrix} = \begin{pmatrix}
        e^{i2\pi}[\bm{A}_{q}]_{u,v} & e^{i4\pi}[\bm{A}^2_q]_{u,v} & ... & e^{i2L\pi}[\bm{A}^L_q]_{u,v} 
    \end{pmatrix},
\end{equation}
where $\bm{F}_q=(1, \exp\{i4\pi q\}, \exp\{i8\pi q\},...,\exp\{i8L\pi q\})\in\mathbb{R}^{1\times (L+1)}$. For notation convenience, let us denote RHS as $\bm{Y}_q$ so we write the linear system as $\bm{F}_q\bm{\Phi}=\bm{Y}_q$. Note that as $\Phi$ are real values, we can know the values of $\bm{Y}_{\frac{1}{2}-q}$ by the complex conjugate of $\bm{Y}_{q}$ thanks to the symmetry of Fourier matrix $\bm{F}_q$:
\begin{equation}
    \bm{Y}_{\frac{1}{2}-q}=\bm{F}_{\frac{1}{2}-q}\bm{\Phi}=\bm{F}_{q}^*\bm{\Phi}=\bm{Y}^*_{q}.
\end{equation}

Now we are ready to prove the theorem. Let $Q=\lceil L/2\rceil+1$ and let $\vec{q}=(q_1, q_2,...,q_Q)$ with each $q_j\le 1/4$ and $q_j$ are distinct. From eigenvalues/vectors at these frequencies we are able to get $\bm{Y}_{\vec{q}}$. Furthermore, we get to know $\bm{Y}_{\frac{1}{2}-\vec{q}}=\bm{Y}_{\vec{q}}^*$ as we just argued. That is, we know $\bm{Y}_{\vec{q}^{\,\prime}}$ where $\vec{q}^{\,\prime}=(\vec{q}, 1/2-\vec{q})$ . Since $\vec{q}< 1/4$, we claim that $1/2-\vec{q}> 1/4$ and thus $\vec{q}^{\,\prime}$ contains $Q^{\prime}:=2\cdot (\lceil L/2\rceil +1)\ge L+2$ different qs. We can put equation $\bm{F}_q\bm{\Phi}=\bm{Y}_q$ for each $q$ in $\vec{q}^{\,\prime}$ into one matrix equation as follows:
    \begin{equation}
    \bm{F}_{\vec{q}^{\,\prime}} \begin{pmatrix}
       \Phi_{u,v}(1, 0) & \Phi_{u,v}(2, 0) & ... & \Phi_{u,v}(L, 0) \\
       \Phi_{u,v}(1, 1) & \Phi_{u,v}(2, 1) & ... & \Phi_{u,v}(L, 1) \\
       0 & \Phi_{u,v}(2, 2) & ... & \Phi_{u,v}(L, 2) \\
       0 & 0 & ... & \Phi_{u,v}(L, 3) \\
       ... & ... & ... & \Phi_{u,v}(L, L) 
    \end{pmatrix} = \begin{pmatrix}
        e^{i2\pi}[\bm{A}_{q_1}]_{u,v} & ... & e^{i2L\pi}[\bm{A}^L_{q_1}]_{u,v} \\
        ...&...&...\\
        e^{i2\pi}[\bm{A}_{q_{Q^{\prime}}}]_{u,v} & ... & e^{i2L\pi}[\bm{A}^L_{q_{Q^{\prime}}}]_{u,v} 
    \end{pmatrix},
\end{equation}
where $\bm{F}_{\vec{q}}=(\bm{F}_{q_1};\bm{F}_{q_2};...;\bm{F}_{q_{L+1}})\in\mathbb{R}^{Q^{\prime}\times (L+1)}$. Since $\bm{F}_{\vec{q}^{\,\prime}}$ contains $\bm{Q}^{\prime}\ge L+2$ different frequencies and thus is full-rank, we can uniquely determine walk profile $\Phi_{u,v}$ from the RHS matrix. Notably, $q_j=\frac{j}{2(L+1)}$ makes $\bm{F}_{\vec{q}}$   \textbf{discrete Fourier transform}, a unitary matrix which is invertible by taking its conjugate transpose.

\end{proof}

\section{Experimental Setup}
\label{appendix-exp-setup}
In this section, we give further implementation details. We use Quadro RTX 6000 on Linux system to train the models. The training time for single run is typically between 1 hour to 5 hours.

Note that for SignNet, we use $\phi$ and $\rho$ to represent $\text{SignNet}(v_1, ..., v_d)=\rho([\phi(v_j)+\phi(-v_j)]_{j=1,...,d})$, where $v_j$ is the $i$-th eigenvectors. 

For SPE, we use both node SPE $z_{\text{node}}$ and edge SPE $z_{\text{edge}}$ by default, except for High-level synthetic task where we find using $z_{\text{node}}$ only has comparable performance to both $z_{\text{node}}$ and $z_{\text{edge}}$. 

For each experiment, the search space for baseline single $q$ is exactly the range of multiple $\vec{q}$. For example, if for multiple $\vec{q}$ we choose $\vec{q}=(1/10, 2/10, ..., 5/10)$, then for baseline single $q$ we search over $q=1/10, 2/10, ..., 5/10$.

\subsection{Distance Prediction on Directed Graphs}
\begin{table}[h!]
\centering
\resizebox{\linewidth}{!}{
\begin{tabular}{lccc}
\hline
Targets & walk profile (regular graphs) & spd (regular graphs)      & lpd (regular graphs)      \\ \hline
Base model     & \multicolumn{3}{c}{8-layer MLPs}                                                      \\
Hidden dim     & \multicolumn{3}{c}{64}                                                                \\ \hline
Batch size     & \multicolumn{3}{c}{512}                                                               \\
Learning rate  & \multicolumn{3}{c}{1e-3}                                                              \\
Dropout        & \multicolumn{3}{c}{0}                                                                 \\
Epoch          & \multicolumn{3}{c}{150}                                                               \\
Optimizer      & \multicolumn{3}{c}{Adam ($\beta_1=0.9, \beta_2=0.999$)}                               \\ \hline
PE dim         & \multicolumn{3}{c}{32}                                                                \\
Multiple q              & (1/10, 2/10, ..., 5/10)       & (1/30, 2/30, ...., 15/30) & (1/30, 2/30, ...., 15/30) \\
PE processing & \multicolumn{3}{c}{SignNet ($\phi=$3-layer MLPs, $\rho$=3-layer MLPs), SPE ($\phi$=3-layer MLPs, $\rho$=2-layer GIN)}\\

\hline
\end{tabular}}
\caption{Hyperparameter for walk profile/shortest path distance/longest path distance prediction on regular directed graphs.}

\label{table-params-dist1}
\end{table}

\begin{table}[h!]
\centering
\resizebox{\linewidth}{!}{
\begin{tabular}{lccc}
\hline
Targets & walk profile (DAG) & spd (DAG)      & lpd (DAG)      \\ \hline
Base model     & \multicolumn{3}{c}{8-layer MLPs}                                                      \\
Hidden dim     & \multicolumn{3}{c}{64}                                                                \\ \hline
Batch size     & \multicolumn{3}{c}{512}                                                               \\
Learning rate  & \multicolumn{3}{c}{1e-3}                                                              \\
Dropout        & \multicolumn{3}{c}{0}                                                                 \\
Epoch          & \multicolumn{3}{c}{150}                                                               \\
Optimizer      & \multicolumn{3}{c}{Adam ($\beta_1=0.9, \beta_2=0.999$)}                               \\ \hline
PE dim         & \multicolumn{3}{c}{32}                                                                \\
Multiple q              & (1/10, 2/10, ..., 5/10)       & (1/20, 2/20, ...., 10/20) & (1/20, 2/20, ...., 10/20) \\
PE processing & \multicolumn{3}{c}{SignNet ($\phi=$3-layer MLPs, $\rho$=3-layer MLPs), SPE ($\phi$=3-layer MLPs, $\rho$=2-layer GIN)}\\

\hline
\end{tabular}}
\caption{Hyperparameter for walk profile/shortest path distance/longest path distance prediction on directed acyclic graphs.}
\label{table-params-dist2}
\end{table}

\subsubsection{Model Hyperparameter}

See Table \ref{table-params-dist1} and \ref{table-params-dist2}. 

\subsection{Sorting Networks}
\begin{table}[h!]
\centering

\resizebox{\linewidth}{!}{
\begin{tabular}{lc}
\hline
Targets & Sorting Network                                                   \\ \hline
Base model     & 3-layer Transformer+mean pooling+3-layer MLPs                     \\
Hidden dim     & 256                                                               \\ \hline
Batch size     & 48                                                                \\
Learning rate  & 1e-4                                                              \\
Dropout        & 0.2                                                                 \\
Epoch          & 15 (SignNet) or 5 (SPE)                                           \\
Optimizer      & Adam ($\beta_1=0.7, \beta_2=0.9$, weight decay $6\times 10^{-5}$) \\ \hline
PE dim         & 25                                                                \\
Multiple q              & (1/20, 2/10, ..., 5/20) / $d_G$                                   \\ 
PE processing & SignNet ($\phi=$3-layer MLPs, $\rho$=3-layer MLPs), SPE ($\phi$=3-layer MLPs, $\rho$=2-layer GIN)\\
\hline
\end{tabular}}
\caption{Hyperparameter for sorting network prediction.}
\label{table-params-sorting}
\end{table}

\subsubsection{Model Hyperparameter}
See Table \ref{table-params-sorting}.
%\textbf{PE config.} Number of eigenvectors=25; single-q tuned from $0.05/d_G$ to $0.25/d_G$, multi-q=$(0.05, 0.10, ..., 0.25)/d_G$.

%\textbf{PE network config.} SignNet: $\phi$=3-layer MLPs, $\rho$=3-layer MLPs; SPE: $\phi_1, ..., \phi_m$=3-layer MLPs, $\rho_{\text{edge}}$=2-layer GIN.

%\textbf{Base model config.} 3-layer vanilla transformer, followed by mean pooing and a 3-layer MLPs.

%\textbf{Training config.} Batch size=64, learning rate=1e-4, epochs=5 if using SPE and 15 if using SignNet or naive PE processing; AdamW optimizer with $\beta_1=0.7, \beta_2=0.9$, weight decay $6e-5$; two epochs warm up followed by cosine annealing scheduler.

\subsubsection{Dataset Generation}

Our sorting networks generation follows exactly as \citet{geisler2023transformers}. To generate a sorting network, the length of sequence (number of variables to sort) is first randomly chosen. Given the sequence length, each step it will generate a pair-wise sorting operator between two random variables, until it becomes a valid sorting network (can correctly sort arbitrary input sequence) or reaches the maximal number of sorting operators. The resulting sorting network is then translated into a directed graph, where each node represents a sorting operator, whose feature is the ids of two variables to sort. In the sorting network, if two sorting operators share a common variable to sort, the corresponding nodes will be connected by a directed edge (from the first operator to the second one).

For each generated sorting network, the test dataset further contains the reversion version of the graph, by reversing every directed edge in the directed graph. The resulting reverse sorting network is very likely not a valid sorting network.

\subsection{Open Circuit Benchmark}

\subsection{Bi-level GNN}
Note that graphs in the dataset are two-level: some directed edges describe the connection of nodes (regular edges), while some extra edges represent subgraph patterns in the graph (subgraph edges). In our experiment, we uniform apply a GNN to the nodes with subgraph edges only, and then apply other GNN to the nodes with regular edges. 

\begin{table}[t!]
\centering
\resizebox{\linewidth}{!}{
\begin{tabular}{lccc}
\hline
Targets & Gain                                      & BW                                       & PM                                      \\ \hline
Base model     & 4-layer bidi. GIN                         & 3-layer bidi. GIN                        & 4-layer bidi. GIN                       \\
Graph Pooling  & Sum &                    Sum & Sum                                                                   \\
Hidden dim     & 96                                        & 192                                      & 288                                     \\ \hline
Batch size     & 128                                       & 64                                       & 64                                      \\
Learning rate  & 0.0067                                    & 0.0065                                   & 0.0021                                  \\
Dropout        & 0.1                                       & 0                                        & 0.2                                     \\
Epoch          & 300 & 300 & 300                                                                                                        \\
Optimizer      & \multicolumn{3}{c}{Adam ($\beta_1=0.9, \beta_2=0.999$)}                                                                        \\ \hline
PE dim         & \multicolumn{3}{c}{10}                                                                                                         \\
Multiple q     & (1/100, 2/100, ..., 10/100)               & (1/100, 2/100, ..., 5/100)               & (1/100, 2/100, ..., 5/100)              \\
PE processing     & \multicolumn{3}{l}{SignNet ($\phi$=1-layer bidi. GIN, $\rho$=3-layer MLPs), SPE ($\phi$=3-layer MLPs, $\rho$=2-layer bidi. GIN)} \\ \hline
\end{tabular}}
\caption{Hyperparameter for bidirected GIN on Open Circuit Benchmark.}
\label{table-params-amp1}
\end{table}

\begin{table}[t!]
\centering
\resizebox{\linewidth}{!}{
\begin{tabular}{lccc}
\hline
Targets & Gain                                      & BW                                       & PM                                      \\ \hline
Base model     & 4-layer undi. GIN                         & 3-layer undi. GIN                        & 4-layer undi. GIN                       \\
Graph Pooling  & Sum &                    Sum & Sum                                                                   \\
Hidden dim     & 96                                        & 192                                      & 288                                     \\ \hline
Batch size     & 128                                       & 64                                       & 64                                      \\
Learning rate  & 0.0067                                    & 0.0065                                   & 0.0021                                  \\
Dropout        & 0.1                                       & 0                                        & 0.2                                     \\
Epoch          & 300 & 300 & 300                                                                                                        \\
Optimizer      & \multicolumn{3}{c}{Adam ($\beta_1=0.9, \beta_2=0.999$)}                                                                        \\ \hline
PE dim         & \multicolumn{3}{c}{10}                                                                                                         \\
Multiple q     & (1/100, 2/100, ..., 10/100)               & (1/100, 2/100, ..., 5/100)               & (1/100, 2/100, ..., 5/100)              \\
PE processing     & \multicolumn{3}{l}{SignNet ($\phi$=1-layer undi. GIN, $\rho$=3-layer MLPs), SPE ($\phi$=3-layer MLPs, $\rho$=2-layer undi. GIN)} \\ \hline
\end{tabular}}
\caption{Hyperparameter for undirected GIN on Open Circuit Benchmark.}
\label{table-params-amp2}
\end{table}

\begin{table}[t!]
\centering
\resizebox{\linewidth}{!}{
\begin{tabular}{lccc}
\hline
Targets & Gain                                     & BW                                       & PM                                       \\ \hline
Base model     & 3-layer SAT (2-hop bidi. GIN)            & 3-layer SAT (2-hop bidi. GIN)            & 3-layer SAT (1-hop bidi. GIN)            \\
Graph Pooling  & Sum & Sum & Sum                                                                                      \\
Hidden dim     & 54                                       & 78                                       & 54                                       \\ \hline
Batch size     & 256                                      & 256                                      & 64                                       \\
Learning rate  & 0.004                                    & 0.00119                                  & 0.00117                                  \\
Dropout        & 0.2                                      & 0.3                                      & 0.2                                      \\
Epoch          & 200 & 200 & 200                                                                                                       \\
Optimizer      & \multicolumn{3}{c}{Adam ($\beta_1=0.9, \beta_2=0.999$)}                                                                        \\ \hline
PE dim         & \multicolumn{3}{c}{10}                                                                                                         \\
Multiple q     & (1/100, 2/100, ..., 5/100)               & (1/100, 2/100, ..., 10/100)              & (1/100, 2/100, ..., 5/100)               \\
PE processing     & \multicolumn{3}{l}{SignNet ($\phi$=1-layer bidi. GIN, $\rho$=3-layer MLPs), SPE ($\phi$=3-layer MLPs, $\rho$=2-layer bidi. GIN)} \\ \hline
\end{tabular}}
\caption{Hyperparameter for SAT (each layer  uses 
 bidirected GIN as kernel) on Open Circuit Benchmark.}
 \label{table-params-amp3}
\end{table}

\begin{table}[t!]
\centering
\resizebox{\linewidth}{!}{
\begin{tabular}{lccc}
\hline
Targets & Gain                                     & BW                                       & PM                                       \\ \hline
Base model     & 3-layer SAT (1-hop undi. GIN)            & 3-layer SAT (1-hop bidi. GIN)            & 3-layer SAT (1-hop bidi. GIN)            \\
Graph Pooling  & Sum & Sum & Sum                                                                                      \\
Hidden dim     & 54                                       & 54                                       & 54                                       \\ \hline
Batch size     & 64                                      & 256                                      & 64                                       \\
Learning rate  & 0.009                                   & 0.002                                  & 0.00117                                  \\
Dropout        & 0.2                                      & 0.3                                      & 0.1                                      \\
Epoch          & 200 & 200 & 200                                                                                                       \\
Optimizer      & \multicolumn{3}{c}{Adam ($\beta_1=0.9, \beta_2=0.999$)}                                                                        \\ \hline
PE dim         & \multicolumn{3}{c}{10}                                                                                                         \\
Multiple q     & (1/100, 2/100, ..., 10/100)               & (1/100, 2/100, ..., 10/100)              & (1/100, 2/100, ..., 10/100)               \\
PE processing     & \multicolumn{3}{l}{SignNet ($\phi$=1-layer undi. GIN, $\rho$=3-layer MLPs), SPE ($\phi$=3-layer MLPs, $\rho$=2-layer undi. GIN)} \\ \hline
\end{tabular}}
\caption{Hyperparameter for SAT (each layer uses 
 undirected GIN as kernel) on Open Circuit Benchmark.}
 \label{table-params-amp4}
\end{table}

\subsubsection{Model Hyperparameter}
See Table \ref{table-params-amp1}, \ref{table-params-amp2}, \ref{table-params-amp3}, \ref{table-params-amp4}.
%\textbf{PE config.} Number of eigenvectors=10; single-q tuned from $0.01$ to $0.1$, multi-q is $(0.01, 0.02, ..., 0.05)$ for predicting BW, PM, DSP, LUT and $(0.01, 0.02, ..., 0.1)$ for predicting Gain.

%\textbf{PE network config.} SignNet: $\phi$=1-layer GIN, $\rho$=3-layer MLPs; SPE: $\phi_1, ..., \phi_m$=3-layer MLPs, $\rho_{\text{edge}}$=2-layer GIN.

%\textbf{Base model config.} 4|3|4-layer regular GIN or 4|3|4-layer bidirectional GIN (different weights for forward edges and backward edges) for target Gain|BW|PM, followed by mean pooing and a 3-layer MLPs.

%\textbf{Training config.} Batch size=128, learning rate=6.7e-3|6.5e-3|2.1e-3 for targets Gain|BW|PM, epochs=300; Adam optimizer with $\beta_1=0.9, \beta_2=0.999$; StepLR scheduler with step size 100 and decay rate 0.95.

\subsection{High-level Synthetic}

\begin{table}[t!]
\centering
\resizebox{\linewidth}{!}{
\begin{tabular}{lcc}
\hline
Targets & DSP                                      & LUT                                                                   \\ \hline
Base model     & 4-layer bidi. GIN                         & 4-layer bidi. GIN                                            \\
Graph Pooling  & Mean &                    Mean                                                                    \\
Hidden dim     & 84                                        & 192                                                                     \\ \hline
Batch size     & 64                                       & 56                                                                          \\
Learning rate  & 0.003                                    & 0.0023                                                                   \\
Dropout        & 0.2                                       & 0.1                                                                            \\
Epoch          & 400 & 400                                                                                                        \\
Optimizer      & \multicolumn{2}{c}{Adam ($\beta_1=0.9, \beta_2=0.999$)}                                                                        \\ \hline
PE dim         & \multicolumn{2}{c}{10}                                                                                                         \\
Multiple q     & (1/100, 2/100, ..., 5/100)               & (1/100, 2/100, ..., 5/100)                         \\
PE processing     & \multicolumn{2}{l}{SignNet ($\phi$=2-layer bidi. GIN, $\rho$=3-layer MLPs), SPE ($\phi$=3-layer MLPs, $\rho$=2-layer bidi. GIN)} \\ \hline
\end{tabular}}
\caption{Hyperparameter for bidirected GIN on High-level Synthetic dataset.}
\label{table-params-hls1}
\end{table}

\begin{table}[t!]
\centering
\resizebox{\linewidth}{!}{
\begin{tabular}{lcc}
\hline
Targets & DSP                                      & LUT                                                                   \\ \hline
Base model     & 2-layer undi. GIN                         & 4-layer bidi. GIN                                            \\
Graph Pooling  & Mean &                    Mean                                                                    \\
Hidden dim     & 56                                        & 84                                                                     \\ \hline
Batch size     & 32                                       & 32                                                                          \\
Learning rate  & 0.006                                    & 0.0015                                                                   \\
Dropout        & 0.1                                       & 0.3                                                                            \\
Epoch          & 400 & 400                                                                                                        \\
Optimizer      & \multicolumn{2}{c}{Adam ($\beta_1=0.9, \beta_2=0.999$)}                                                                        \\ \hline
PE dim         & \multicolumn{2}{c}{10}                                                                                                         \\
Multiple q     & (1/100, 2/100, ..., 5/100)               & (1/100, 2/100, ..., 5/100)                         \\
PE processing     & \multicolumn{2}{l}{SignNet ($\phi$=2-layer undi. GIN, $\rho$=3-layer MLPs), SPE ($\phi$=3-layer MLPs, $\rho$=2-layer undi. GIN)} \\ \hline
\end{tabular}}
\caption{Hyperparameter for undirected GIN on High-level Synthetic dataset.}
\label{table-params-hls2}
\end{table}

\begin{table}[t!]
\centering
\resizebox{\linewidth}{!}{
\begin{tabular}{lcc}
\hline
Targets & DSP                                      & LUT                                                                   \\ \hline
Base model     & 4-layer SAT (2-hop bidi. GIN)                         & 3-layer SAT (2-hop bidi. GIN)                                            \\
Graph Pooling  & Mean &                    Mean                                                                    \\
Hidden dim     & 78                                        & 62                                                                     \\ \hline
Batch size     & 64                                       & 64                                                                          \\
Learning rate  & 0.004                                    & 0.004                                                                   \\
Dropout        & 0.1                                       & 0.1                                                                            \\
Epoch          & 400 & 400                                                                                                        \\
Optimizer      & \multicolumn{2}{c}{Adam ($\beta_1=0.9, \beta_2=0.999$)}                                                                        \\ \hline
PE dim         & \multicolumn{2}{c}{10}                                                                                                         \\
Multiple q     & (1/100, 2/100, ..., 5/100)               & (1/100, 2/100)                         \\
PE processing     & \multicolumn{2}{l}{SignNet ($\phi$=2-layer bidi. GIN, $\rho$=3-layer MLPs), SPE ($\phi$=3-layer MLPs, $\rho$=2-layer bidi. GIN)} \\ \hline
\end{tabular}}
\caption{Hyperparameter for SAT (each layer uses bidirected GIN as kernel)  on High-level Synthetic dataset.}
\label{table-params-hls3}
\end{table}

\begin{table}[t!]
\centering
\resizebox{\linewidth}{!}{
\begin{tabular}{lcc}
\hline
Targets & DSP                                      & LUT                                                                   \\ \hline
Base model     & 3-layer SAT (2-hop bidi. GIN)                         & 3-layer SAT (2-hop bidi. GIN)                                            \\
Graph Pooling  & Mean &                    Mean                                                                    \\
Hidden dim     & 98                                        & 62                                                                     \\ \hline
Batch size     & 64                                       & 64                                                                          \\
Learning rate  & 0.004                                    & 0.004                                                                   \\
Dropout        & 0.1                                       & 0.1                                                                            \\
Epoch          & 400 & 400                                                                                                        \\
Optimizer      & \multicolumn{2}{c}{Adam ($\beta_1=0.9, \beta_2=0.999$)}                                                                        \\ \hline
PE dim         & \multicolumn{2}{c}{10}                                                                                                         \\
Multiple q     & (1/100, 2/100, ..., 5/100)               & (1/100, 2/100, ..., 5/100)                         \\
PE processing     & \multicolumn{2}{l}{SignNet ($\phi$=2-layer bidi. GIN, $\rho$=3-layer MLPs), SPE ($\phi$=3-layer MLPs, $\rho$=2-layer bidi. GIN)} \\ \hline
\end{tabular}}
\caption{Hyperparameter for SAT (each layer uses undirected GIN as kernel)  on High-level Synthetic dataset.}
\label{table-params-hls4}
\end{table}

\subsubsection{Model Hyperparameter}
See Table \ref{table-params-hls1}, \ref{table-params-hls2}, \ref{table-params-hls3}, \ref{table-params-hls4}.
%\textbf{PE config.} Number of eigenvectors=10; single-q tuned from $0.01$ to $0.05$, multi-q is $(0.01, 0.02, ..., 0.05)$.

%\textbf{PE network config.} SignNet: $\phi$=2-layer GIN, $\rho$=3-layer MLPs; SPE: $\phi_1, ..., \phi_m$=3-layer MLPs, $\rho_{\text{edge}}$=2-layer GIN.

%\textbf{Base model config.} 2|4-layer undirected GIN or 4|4-layer bidirectional GIN (different weights for forward edges and backward edges) for target DSP|LUT, followed by mean pooing and a 3-layer MLPs.

%\textbf{Training config.} Batch size 32 for undirected GIN and 64 for bidirected GIN; learning rate=6e-3|3e-3|1.5e-3|2.3e-3 for undirected-GIN-DSP|bidirected-GIN-DSP|undirected-GIN-LUT|bidirected-GIN-LUT, epochs=400; Adam optimizer with $\beta_1=0.9, \beta_2=0.999$; StepLR scheduler with step size 100 and decay rate 0.95.

\section{Varying Single $q$ Baseline}
\label{appendix-q-value}
This section shows the effect the varying single $q$ on the performance of Mag-PE. Horizontal axis is the value of single $q$, vertical axis is the test performance. The dashed line represents the result of Multi-q Magnetic Laplacian with fixed multi-q $\vec{q}$ as described in Section \ref{exp-dist}.

As shown in Figures \ref{fig:vary-q-dist},  \ref{fig:vary-q-sort}, \ref{fig:vary-q-amp1}, \ref{fig:vary-q-amp2}, we can see that the multiple q performance consistently beats best performance of single q, which demonstrates the benefits of using multiple q simultaneously. 
\begin{figure}[h!]
    \centering
    \includegraphics[scale=0.5]{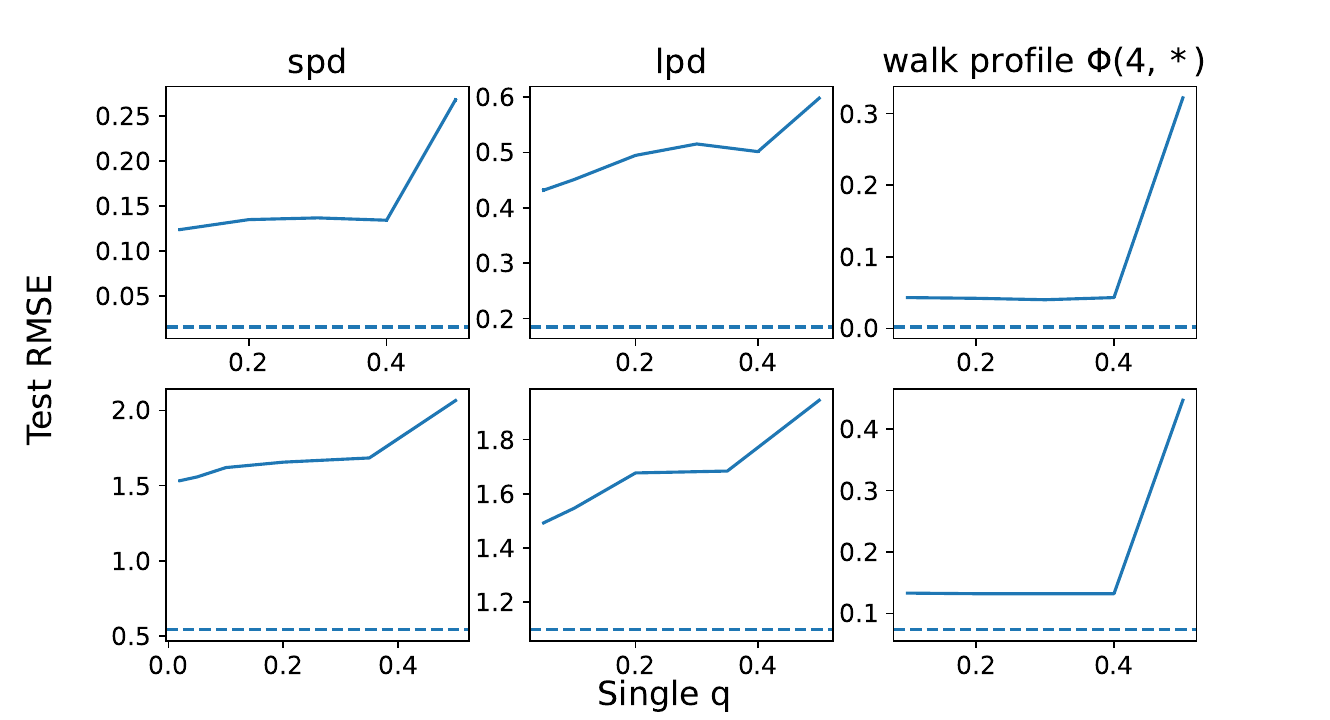}
    \caption{Test RMSE for distance prediction with varying $q$ of single-q Magnetic Laplacian. First row is for directed acyclic graphs, and second row is for regular directed graphs.}
    \label{fig:vary-q-dist}
\end{figure}
\begin{figure}[h!]
    \centering
    \includegraphics[scale=0.35]{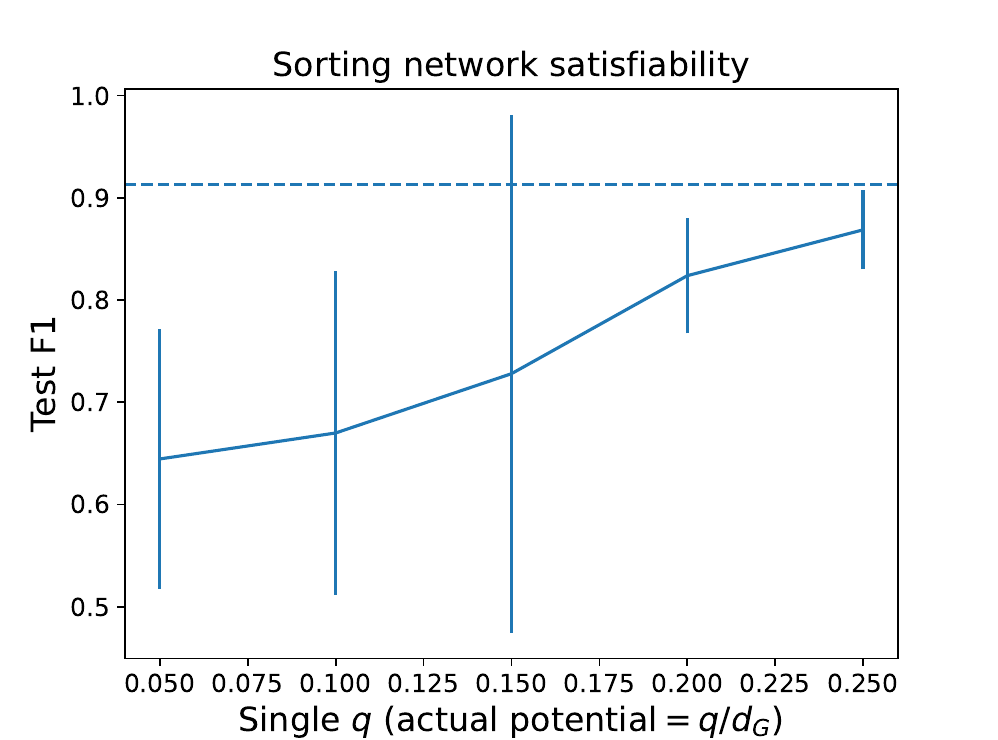}
   \caption{Test F1 of sorting network satisfibability with varying single $q$.}
    \label{fig:vary-q-sort}
\end{figure}

\begin{figure}[h!]
    \centering
    \includegraphics[scale=0.39]{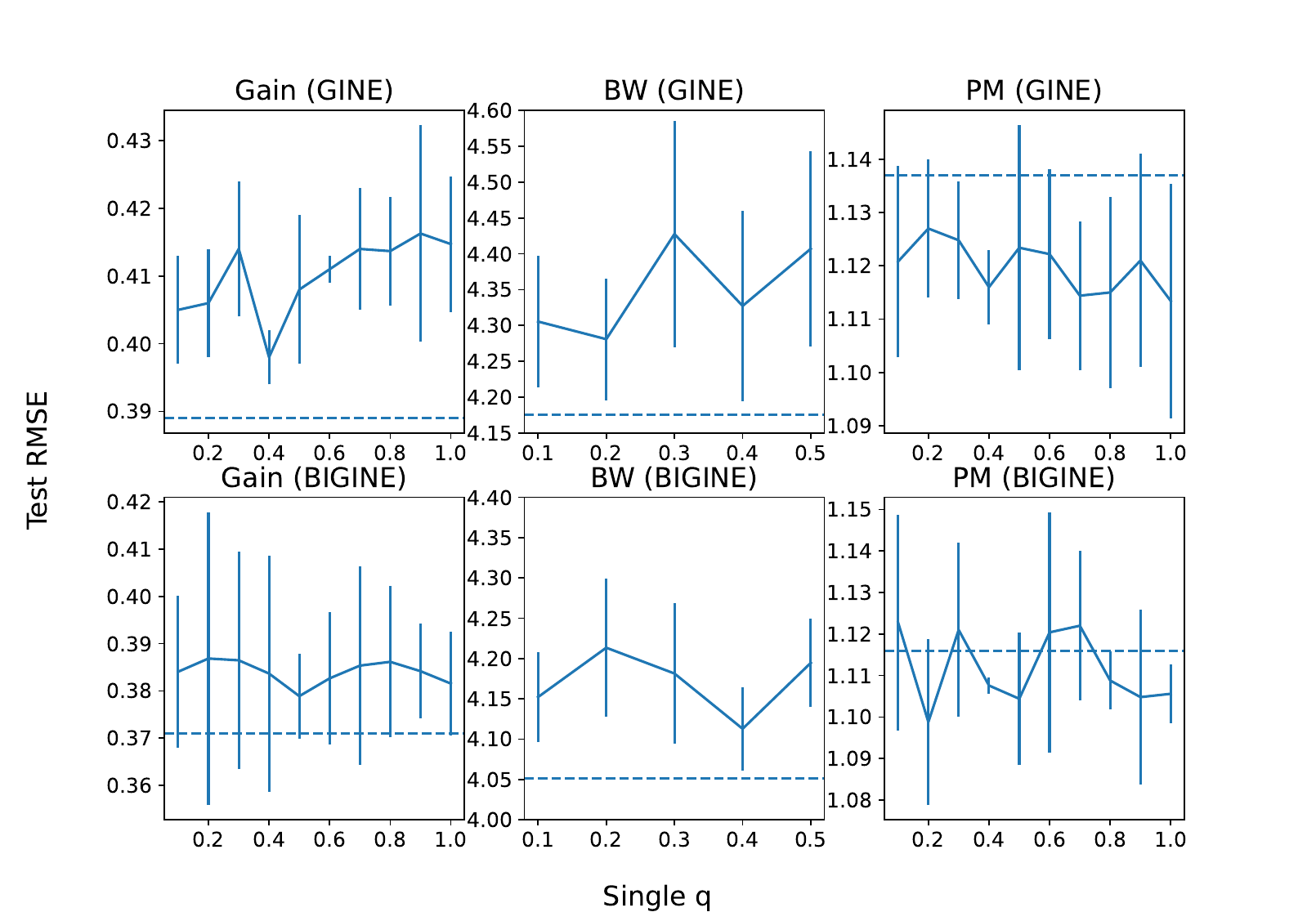}
    \caption{Test RMSE for prediction of Gain, BW, PM using backbone GINE or BIGINE, with varying $q$ of single-q Magnetic Laplacian.}
    \label{fig:vary-q-amp1}
\end{figure}
\begin{figure}[h!]
    \centering
    \includegraphics[scale=0.35]{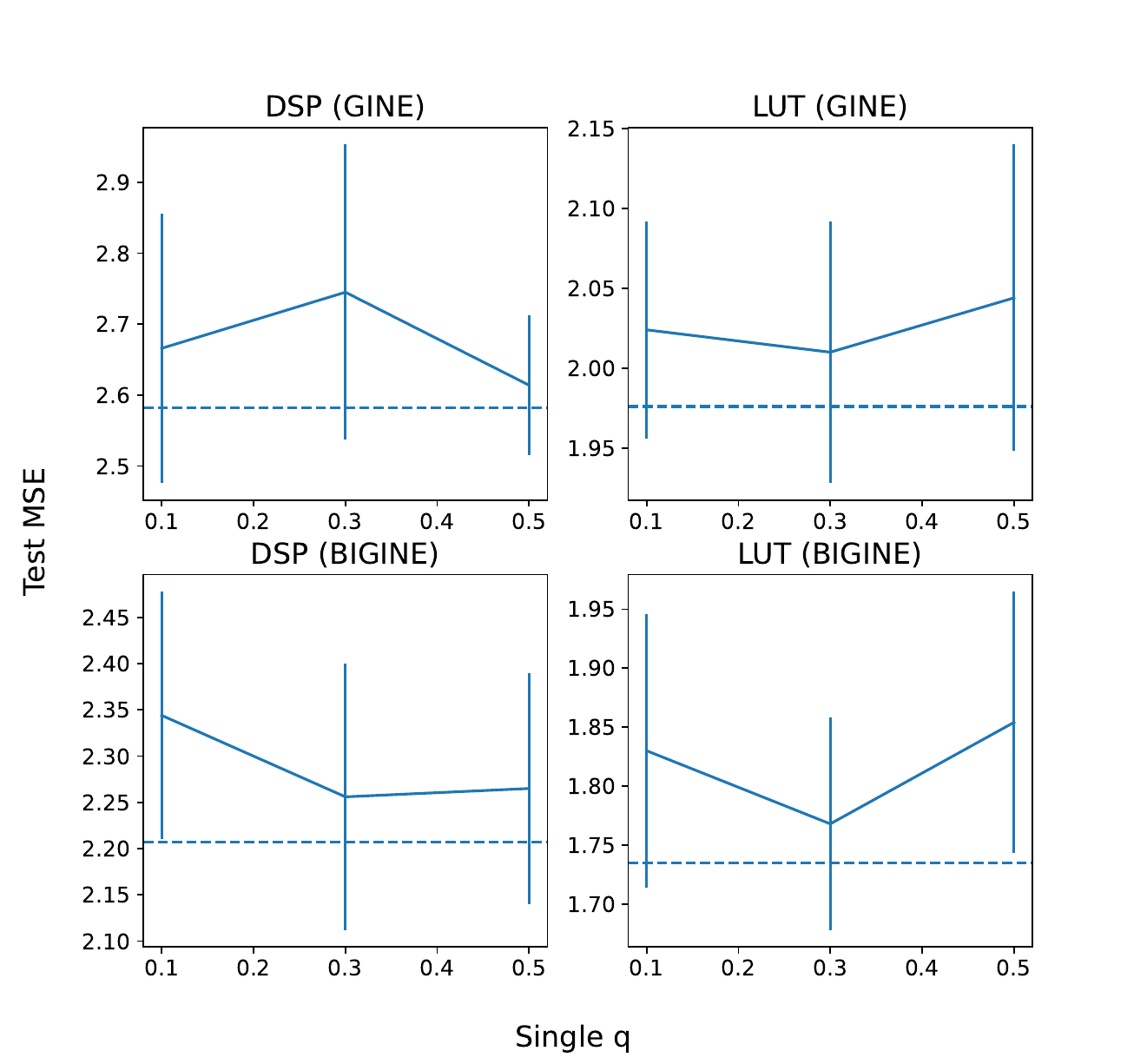}
    \caption{Test MSE for prediction of DSP and LUT, using backbone GINE or BIGINE, with varying $q$ of single-q Magnetic Laplacian.}
    \label{fig:vary-q-amp2}
\end{figure}

\clearpage
%\section{Sensitivity of Choice of Multiple q}

\begin{table}[t!]
\caption{Test RMSE results over 3 random seeds for node-pair distance prediction. }. %\pan{SVD one? node-SPE? node+edge-SPE? Still, the experiments read very limited/ read kind of mean.}\yinan{SVD will be added. This is a link prediction task, so we just have edge-SPE.} }
\centering
\resizebox{\linewidth}{!}{
\begin{tabular}{llllllll}
\Xhline{3\arrayrulewidth}
    &                & \multicolumn{3}{c}{Directed Acyclic Graph}                             & \multicolumn{3}{c}{Regular Directed Graph}                             \\
 PE method     & PE processing                   & \multicolumn{1}{c}{$spd$} & \multicolumn{1}{c}{$lpd$}  & \multicolumn{1}{c}{$wp(4, \cdot)$} & \multicolumn{1}{c}{$spd$} & \multicolumn{1}{c}{$lpd$} & \multicolumn{1}{c}{$wp(4, \cdot)$} \\ \Xhline{2\arrayrulewidth}
\multirow{3}{*}{Lap} & Naive               &    $0.488_{\pm 0.005}$
& $0.727_{\pm 0.005}$   &  $0.370_{\pm 0.004}$                                  &   $2.068_{\pm 0.004}$                        &  $1.898_{\pm 0.001}$     &   $0.480_{\pm 0.000}$                                 \\
 & SignNet              &   $0.537_{\pm 0.013}$                         &  $0.771_{\pm 0.013}$     &   $0.437_{\pm 0.000}$                                 &   $2.064_{\pm 0.004}$                        &  $1.900_{\pm 0.002}$     &   $0.518_{\pm 0.027}$                                 \\
 & SPE                 &     $0.355_{\pm 0.001}$
       & $0.655_{\pm 0.002}$ &$0.326_{\pm 0.001}$ &  $2.066
_{\pm 0.005}$      &  $1.920_{\pm 0.000}$   & $0.452_{\pm 0.001}$                               \\\hline
\multirow{3}{*}{SVD} & Naive               &   $0.649_{\pm 0.002}$ 
&  $0.853_{\pm 0.002}$ &        $0.721_{\pm 0.000}$                           & $2.196_{\pm 0.002}$    & $1.982_{\pm 0.004}$     &   $0.519_{\pm 0.000}$                         \\
 & SignNet     &    $0.673_{\pm 0.003}$                       &  $0.872_{\pm 0.002}$    &   $0.443_{\pm 0.001}$ & $2.229_{\pm 0.003}$      & $1.996_{\pm 0.005}$      & $0.541_{\pm 0.001}$                                \\
 & SPE   & $0.727_{\pm 0.001}$  &$0.912_{\pm 0.001}$ &$0.721_{\pm 0.000}$ &  $2.261_{\pm 0.002}$
     & $2.122_{\pm 0.007}$    &   $0.755_{\pm 0.000}$                           \\\hline
\multirow{3}{*}{MagLap-1q (q=0.1)} & Naive   &     $0.366_{\pm 0.003}$                      & $0.593_{\pm 0.003}$      & $0.241_{\pm 0.010}$    &    $1.826_{\pm 0.005}$                       &  $1.760_{\pm 0.007}$     &     $0.311_{\pm 0.013}$                               \\
 & SignNet &   $0.554_{\pm 0.001}$                        & $0.699_{\pm 0.002}$      &   $0.268_{\pm 0.042}$                                 &   $2.048_{\pm 0.004}$                        &  $1.881_{\pm 0.003}$     &  $0.413_{\pm 0.001}$                                  \\
 & SPE     &  $0.124_{\pm 0.002}$& $0.433_{\pm 0.002}$       &  $0.043_{\pm 0.000}$ &   $1.620_{\pm 0.005
}$  & $1.547_{\pm 0.004}$      &  $0.133_{\pm 0.001}$                                  \\\hline
MagLap-1q (best q) & SPE     &  $0.124_{\pm 0.002}$& $0.432_{\pm 0.004}
$       & $0.040_{\pm 0.001}$   & $1.533
_{\pm 0.007}$   &  $1.493_{\pm 0.003}$     &    $0.132_{\pm 0.000}$                              \\
\Xhline{2.5\arrayrulewidth}
\multirow{3}{*}{MagLap-Multi-q} & Naive  &  $0.353_{\pm 0.003}$                       &  $0.535_{\pm 0.006}$     &    $0.188_{\pm 0.010}$                            &    $1.708_{\pm 0.012}$                  & $1.661_{\pm 0.002}$                        &     $0.257_{\pm 0.007}$                                                           \\
 & SignNet  &   $0.473_{\pm 0.000}$                      &    $0.579_{\pm 0.001}$     &    $0.280_{\pm 0.004}$                               & $1.906_{\pm 0.001}$                      &  $1.784_{\pm 0.008}$     &   $0.377_{\pm 0.007}$                                 \\
 & SPE      &   $0.016_{\pm 0.000}$        & $0.185_{\pm 0.036}$      &  $\bm{0.002_{\pm 0.000}}$ &            $\bm{0.546_{\pm 0.068}}$                  &   $1.100_{\pm 0.007}$  &           $\bm{0.074_{\pm 0.001}}$   \\\hline
 \multirow{3}{*}{\textbf{MagLap-Multi-q (random $\vec{q}$)}} & Naive  &  $0.356	_{\pm 0.022}$                       &  $0.564_{\pm 0.040}$     &    $0.204_{\pm 0.009}$                            &    $1.781_{\pm 0.021}$                  & $1.665_{\pm 0.011}$                        &     $0.298_{\pm 0.033}$                                                           \\
 & SignNet  &   $0.469_{\pm 0.003}$                      &    $0.579_{\pm 0.008}$     &    $0.276_{\pm 0.008}$                               & $1.895_{\pm 0.013}$                      &  $1.769_{\pm 0.020}$     &   $0.252_{\pm 0.218}$                                 \\
 & SPE      &   $\bm{0.015_{\pm 0.002}}$        & $\bm{0.146_{\pm 0.010}}$      &  $\bm{0.002_{\pm 0.000}}$ &            $0.564_{\pm 0.024}$                  &   $\bm{1.095_{\pm 0.009}}$  &     $\bm{0.074_{\pm 0.004}}$    
\\\Xhline{3\arrayrulewidth}
\end{tabular}}
\label{table-dist-2}
\end{table}

\begin{figure}[t!]
\begin{minipage}{.5\linewidth}
    \centering
\resizebox{1\linewidth}{!}{
\begin{tabular}{llc}
\hline

 PE method    & PE processing & Test F1             \\\hline
 \multirow{3}{*}{Lap}   & Naive & $51.39_{\pm 2.36}$ \\
 
 & SignNet   &   $49.50_{\pm 4.01}$             \\
     & SPE       &    $56.35_{\pm 3.70}$          \\\cline{1-3}
 \multirow{3}{*}{Maglap-1q (best q)} & Naive & $68.68_{\pm 9.66}$ \\
 & SignNet   &  $76.28_{\pm 6.82}$                       \\
    & SPE           &   $86.86_{\pm 3.85}$           \\\cline{1-3} 
  \multirow{3}{*}{Maglap-5q} & Naive & $75.12_{\pm 12.78}$\\
  & SignNet   &       $72.97_{\pm 9.38}$                      \\
  & SPE&    $\bm{91.27_{\pm 0.71}}$             \\\hline
   \multirow{3}{*}{\textbf{Maglap-5q (random $\vec{q}$)}} & Naive & $51.47_{\pm 19.92}$\\
  & SignNet   &       $50.37_{\pm 2.12}$                      \\
  & SPE&    $89.36_{\pm 2.74}$             \\\hline
\end{tabular}}
    \captionof{table}{Test F1 scores over 5 random seeds for sorting network satisfiability.\label{table-sorting-2}} %\pan{SVD one? and explain why not just mlp. Still the results are very mean.\yinan{results to be added in a second}}.\label{table-sorting}}
  \end{minipage}\hfill
  \begin{minipage}{.45\linewidth}
    \centering
    \includegraphics[scale=0.42]{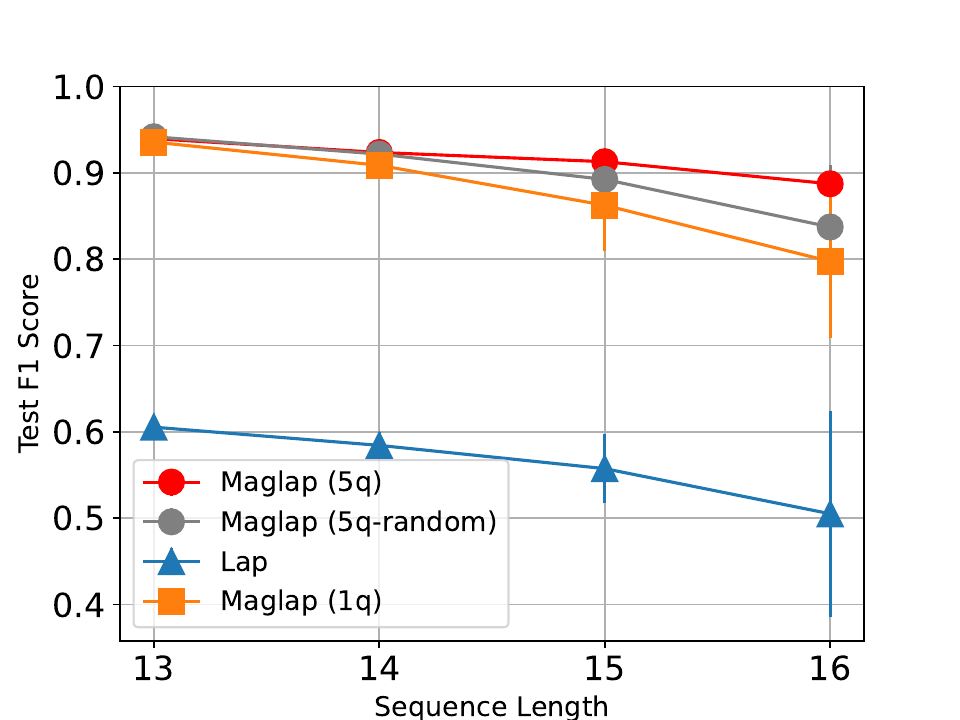}
    \captionof{figure}{Test F1 scores w.r.t. different sorting network lengths.\label{fig-sorting-2}}% \caption{Figure caption}
  \end{minipage} 
\vspace{-3mm}
\end{figure}

\section{Runtime Evaluation}
\label{appendix-runtime}
First, we note that introducing multiple q brings extra computational cost, as stated in the Conclusion and Limitations section. But we would like to emphasize that the goal of this work is to illustrate the drawbacks and benefits of single q v.s. multiple q in theory. In particular, we prove that to represent walk profiles—a bidirectional walk descriptor, multiple q are required. Thus there is a natural trade-off between complexity and expressivity. Results on prediction of various distance metrics demonstrates the superior power of multiple q, further validating our theory.

We evaluate the actual runtime of preprocessing, training and inference stage. For distance prediction, we choose connected acyclic graphs; for high-level synthetic dataset, we choose undirected GIN as backbone for evaluation. We use SPE to handle PEs 
 and all other hyperparameters are the same as mentioned in Appendix \ref{appendix-exp-setup}. We exclude Open Circuit Benchmark since the the graph size is too small to be interesting (10 to 20 nodes). Figure \ref{fig-runtime} show the runtime evaluation of multiple q for our tasks, using single Quadro RTX 6000.  
 Preprocessing time refers to the pre-computation of eigendecompositions, and training/inference time refers to the runtime on the training set per epoch (average over 10 epochs). Note that the preprocessing time is typically not a concern, as it is far less than the total training time, e.g., on high-level synthesis total preprocessing time / total training time (400 epochs) is about 0.5\%. %For training time, 5q is 38%, 10%, 38% slower than single-q if using SignNet, and 22%, 74%, 51% slower than single-q using SPE.

 The dataset statistics are also included below for reference.
\begin{table}[h!]
\centering
\begin{tabular}{lll}
\hline
Dataset              & Average Num. of Nodes & Average Num. of Edges \\ \hline
Distance Prediction  & 27.4                  & 35.8                  \\
Sorting Networks     & 72.8                  & 272.9                 \\
High-level Synthesis & 94.7                  & 122.1                 \\ \hline
\end{tabular}
\caption{Dataset statistics.}
\end{table}

\section{Impact of Multiple $q$ Values Selection Policy}
\label{appendix-multiple-q-values}
 As implied by Theorem \ref{theorem:multi-q},  any choice of multiple $q$ values should yields the same level of expressivity to compute walk profile. To verify this, we further implement a randomly sampled $\vec{q}$ (each $q$ is uniformly sampled from $[0, 1/2]$) to 
 predict distances and walk profile. Table \ref{table-dist-2},\ref{table-sorting-2},\ref{table-amp-2} and Figure \ref{fig-sorting-2} show the results of randomly sampled multiple $q$ compared to previous methods, where evenly-spaced $\vec{q}=(\frac{1}{2L}, \frac{2}{2L}, ..., \frac{L}{2L})$ for some $L$ (see specific values in Appendix \ref{appendix-exp-setup}) is the origin one we adopted in the main text, and random $\vec{q}=(q_1, q_2,...,q_L)$ contains $L$ many $q$ values randomly sampled from $(0, 1/2]$. Note that collision $q_i\approx q_j$ could lead to a ill-conditioned recovery of walk profile, so we force that $|q_i-q_j|\ge 0.2/L$ during $q$ sampling process. For each run of model training (i.e., each random seed), we sample a different random $\vec{q}$. Table \ref{table-dist-2} shows the results. We can see that randomly sampled $\vec{q}$ has comparable and even better performance than evenly-spaced $\vec{q}$, which validates our theory.

\begin{table}[t!]
\caption{Test results (RMSE for Gain/BW/PM, MSE for DSP/LUT) for circuit properties prediction. \textbf{Bold} denotes the best result for each base model, and \textbf{Bold}$^{\dagger}$ for the best result among all base models.\label{table-amp-2}}
\resizebox{\linewidth}{!}{
\begin{tabular}{lllccccc}
\Xhline{3\arrayrulewidth}

Base Model                        & PE method    & PE processing & Gain            & BW           & PM  & DSP & LUT        \\\hline
\multirow{9}{*}{Undirected-GIN} & \multirow{2}{*}{Lap}        & SignNet   &    $0.430_{\pm 0.009}$    & $4.712_{\pm 0.134}$              &   $1.127_{\pm 0.007}$  & $2.665_{\pm 0.184}$ & $2.025_{\pm 0.057}$          \\
&       & SPE       & $0.416_{\pm 0.021}$                & $4.321_{\pm 0.084}$             &    $1.127_{\pm 0.020}$  &  $2.662_{\pm 0.187}$ & $\bm{1.925_{\pm 0.059}}$  \\\cline{2-8}
& \multirow{2}{*}{Maglap-1q (q=0.01)}  & SignNet   & $0.426_{\pm 0.009}$     & $4.670_{\pm 0.113}$              &  $1.116_{\pm	0.009}$ & $2.673_{\pm 0.090}$ & $2.027_{\pm 0.091}$            \\
 &   & SPE       &  $0.405_{\pm 0.016}$               & $4.305_{\pm 0.092}$             &   $1.121_{\pm 0.018}$ & $2.666_{\pm 0.190}$ & $2.024_{\pm 0.068}$           \\\cline{2-8} 
 & Maglap-1q (best q)  & SPE       &  $0.398_{\pm 0.025}$               & $4.281_{\pm 0.085}$             &   $\bm{1.113_{\pm 0.022}}$  & $2.614_{\pm 0.098}$ & $2.010_{\pm 0.082}$         \\\cline{2-8} 
 & \multirow{2}{*}{Maglap-Multi-q} & SignNet   &  $0.421_{\pm 0.015}$   &   $4.743_{\pm 0.215}$           &  $1.126_{\pm 0.011}$   & $2.665_{\pm 0.111}$ & $2.025_{\pm 0.076}$       \\
 & & SPE&  $\bm{0.389_{\pm	0.017}}$               &  $\bm{4.175_{\pm 0.115}}$            &    $1.137_{\pm	0.004}$ & $\bm{2.582_{\pm 0.133}}$ & $1.976_{\pm 0.089}$  \\\cline{2-8}
 & \multirow{2}{*}{\textbf{Maglap-Multi-q (random)}} & SignNet   & $0.409_{\pm 0.021}$    & $4.655_{\pm 0.054}$          & $1.124_{\pm  0.016}$   &$2.784_{\pm 0.157}$ &$2.087_{\pm 0.098}$ \\       & & SPE&  $0.397_{\pm 0.020}$              &$4.267_{\pm 0.089}$& $1.142_{\pm 0.011}$    &$2.772_{\pm 0.123}$  & $2.103_{\pm 0.117}$ 
 \\\Xhline{2.5\arrayrulewidth}
\multirow{9}{*}{Bidirected-GIN} & \multirow{2}{*}{Lap}        & SignNet   &  $0.382_{\pm 0.008}$  &$4.371_{\pm 0.171}$              &   $1.127_{\pm 0.021}$ & $2.256_{\pm 0.109}$  & $1.806_{\pm 0.096}$          \\
 &   & SPE   &   $0.391_{\pm 0.007}$ &             $4.153_{\pm0.160}$    &   $1.135_{\pm 0.035}$    & $2.267_{\pm 0.126}$ & $1.786_{\pm 0.072}$       \\\cline{2-8}
                                 & \multirow{2}{*}{Maglap-1q (q=0.01)}&SignNet   &  $0.388_{\pm 0.012}$ &  $4.351_{\pm 0.132}$          & $1.131_{\pm 0.012}$ & $2.304_{\pm 0.143}$ & $1.882_{\pm 0.085}$            \\
&  & SPE  & $0.384_{\pm 0.008}$& $4.152_{\pm 0.056}$    &   $ 1.123_{\pm 0.026}$ & $2.344_{\pm 0.134}$ & $1.830_{\pm 0.116}$        \\\cline{2-8}
& Maglap-1q (best q)  & SPE       &  $0.383_{\pm 0.002}$               & $4.113_{\pm 0.052}$             &   $\bm{1.099_{\pm 0.020}}$  & $2.256_{\pm 0.144}$ & $1.768_{\pm 0.090}$     \\\cline{2-8}
   & \multirow{2}{*}{Maglap-Multi-q} & SignNet   &  $0.381	_{\pm 0.008}$               & $4.443_{\pm 0.116}$             &  $1.119_{\pm 0.016}$      & $2.212_{\pm 0.116}$ & $1.791_{\pm 0.091}$      \\
&& SPE &$\bm{0.371_{\pm 0.008}}$            & $\bm{4.051_{\pm 0.139}}$   & $1.116_{\pm 0.012}$  & $\bm{2.207^{\dagger}_{\pm 0.185}}$ & $\bm{1.735^{\dagger}_{\pm 0.096}}$   \\\cline{2-8}
 & \multirow{2}{*}{\textbf{Maglap-Multi-q (random)}} & SignNet   & $0.378_{\pm 0.008}$  &         $4.372_{\pm 0.108}$     & $1.142_{\pm 0.016}$    &$2.276_{\pm 0.170}$  &$1.823_{\pm 0.089}$ \\       & & SPE& $0.388_{\pm 0.020}$                &  $4.102_{\pm 0.125}$           &  $1.130_{\pm 0.018}$   & $2.499_{\pm 0.203}$ &  $1.875_{\pm 0.082}$        \\\Xhline{2.5\arrayrulewidth}
\multirow{9}{*}{SAT (undirected-GIN)} & \multirow{2}{*}{Lap}        & SignNet   & $0.368_{\pm 0.022}$      & $4.085_{\pm 0.189}$    &  $1.038_{\pm 0.016}$    & $3.103_{\pm 0.101}$ & $2.223_{\pm 0.175}$    \\
&       & SPE       &   $0.375_{\pm 0.016}$              &   $4.180_{\pm 0.093}$           &  $1.065_{\pm 0.034}$    & $3.167_{\pm 0.193}$  & $2.425_{\pm 0.168}$  \\\cline{2-8}
& \multirow{2}{*}{Maglap-1q (q=0.01)}  & SignNet   &$0.382_{\pm 0.009}$     &  $4.143_{\pm 0.181}$  & $1.073_{\pm 0.021}$  & $3.087_{\pm 0.183}$ &      $2.214_{\pm 0.150}$      \\
 &   & SPE       &   $0.366_{\pm 0.003}$             & $4.081_{\pm 0.071}$   & $1.089_{\pm 0.023}$    & $3.206_{\pm 0.197}$ & $2.362_{\pm 0.154}$           \\\cline{2-8} 
 & Maglap-1q (best q)  & SPE       & $0.361_{\pm 0.016}$               &    $4.014_{\pm 0.068}$       &$1.057_{\pm 0.036}$  &$3.101_{\pm 0.176}$&  $2.362_{\pm 0.154}$      \\\cline{2-8} 
 & \multirow{2}{*}{Maglap-Multi-q} & SignNet   & $0.368_{\pm 0.020}$   &     $4.044_{\pm 0.090}$      & $1.066_{\pm 0.028}$   &$3.121_{\pm 0.143}$  &  $\bm{2.207_{\pm 0.113}}$      \\
 & & SPE&  $\bm{0.350^{\dagger}_{\pm 0.004}}$               &$4.044_{\pm 0.153}$      & $\bm{1.035^{\dagger}_{\pm 0.025}}$   & $\bm{3.076_{\pm 0.240}}$ & $2.333_{\pm 0.147}$     \\\cline{2-8}
 & \multirow{2}{*}{\textbf{Maglap-Multi-q (random)}} & SignNet   &$0.361_{\pm 0.015}$   &      $4.122_{\pm 0.067}$        & $1.094_{\pm 0.064}$    &$3.106_{\pm 0.123}$  & $2.236_{\pm 0.128}$ \\       & & SPE&  $0.356_{\pm 0.010}$             &  $\bm{3.953_{\pm 0.104}}$           &  $1.052_{\pm 0.014}$   &$3.086_{\pm 0.174}$ &  $2.320_{\pm 0.135}$    \\\Xhline{2.5\arrayrulewidth}
\multirow{9}{*}{SAT (bidirected-GIN)} & \multirow{2}{*}{Lap}        & SignNet   &  $0.384_{\pm 0.025}$ & $3.949_{\pm 0.125}$   & $1.069_{\pm 0.029}$    & $\bm{2.569_{\pm 0.116}}$  & $2.048_{\pm 0.088}$          \\
 &   & SPE   &$0.368_{\pm 0.022}$   & $4.024_{\pm 0.106}$ &  $1.046_{\pm 0.021}$   &$2.713_{\pm 0.135}$  & $2.173_{\pm 0.107}$       \\\cline{2-8}
                                 & \multirow{2}{*}{Maglap-1q (q=0.01)}&SignNet   & $0.384_{\pm 0.015}$  &  $4.023_{\pm 0.032}$   & $1.055_{\pm 0.028}$  & $2.616_{\pm 0.120}$& $2.054_{\pm 0.127}$     \\
&  & SPE  & $0.364_{\pm 0.012}$ & $3.996_{\pm 0.178}$    &$1.074_{\pm 0.030}$   & $2.687_{\pm 0.209}$ & $2.192_{\pm 0.135}$       \\\cline{2-8}
& Maglap-1q (best q)  & SPE    & $\bm{0.360_{\pm 0.009}}$     & $3.960_{\pm 0.060}$    & $1.062_{\pm 0.024}$ &$2.657_{\pm 0.128}$   & $2.107_{\pm 0.135}$       \\\cline{2-8}
   & \multirow{2}{*}{Maglap-Multi-q} & SignNet   &   $0.420_{\pm 0.035}$           & $4.022_{\pm 0.128}$ & $1.089_{\pm 0.046}$      & $2.741_{\pm 0.110}$ & $2.045_{\pm 0.079}$    \\
&& SPE &  $\bm{0.359_{\pm 0.008}}$ & $\bm{3.930^{\dagger}_{\pm 0.069}}$    & $\bm{1.045_{\pm 0.012}}$   & $2.616_{\pm 0.151}$  &   $2.082_{\pm 0.099}$  \\\cline{2-8}
 & \multirow{2}{*}{\textbf{Maglap-Multi-q (random)}} & SignNet   & $0.417_{\pm 0.040}$  &     $4.053_{\pm 0.094}$         & $1.078_{\pm 0.043}$    &$2.677_{\pm 0.146}$  & $\bm{2.016_{\pm 0.078}}$ \\       & & SPE&  $0.363_{\pm 0.007}$              &  $3.956_{\pm 0.087}$            & $1.064_{\pm 0.022}$    &$2.690_{\pm 0.165}$  & $2.259_{\pm 0.147}$          \\ \Xhline{3\arrayrulewidth}
\end{tabular}}
\end{table}

\section{Distance Prediction with Graph Transformers}
In Table \ref{table-dist-sat}, we compared the effectiveness of powerful graph transformers SAT~\citep{chen2022structure} equipped with different PEs to predict various distance notions. The base graph encoder in sAT is a bidirectional GIN model. The results show that pairing previous PEs with a strong model cannot fully solve the problem of capturing graph distances, while Multi-q Mag PE still brings advantages.

\begin{table}[t!]
\caption{Test RMSE results over 3 random seeds for node-pair distance prediction, with SAT architecture.} %\pan{SVD one? node-SPE? node+edge-SPE? Still, the experiments read very limited/ read kind of mean.}\yinan{SVD will be added. This is a link prediction task, so we just have edge-SPE.} }

\centering
\resizebox{0.85\linewidth}{!}{
\begin{tabular}{llllll}
\Xhline{3\arrayrulewidth}
    &                & \multicolumn{3}{c}{Directed Acyclic Graph}                                      \\
 Model architecture &PE method     & PE processing                   & \multicolumn{1}{c}{$spd$} & \multicolumn{1}{c}{$lpd$}  & \multicolumn{1}{c}{$wp(4, \cdot)$} \\ \Xhline{2\arrayrulewidth}
SAT (directed GIN) &\multirow{1}{*}{Lap} 
 & SPE                 &     $0.355_{\pm 0.001}$
       & $0.655_{\pm 0.002}$ &$0.326_{\pm 0.001}$                              \\\hline
SAT (directed GIN)&\multirow{1}{*}{SVD} 
 & SPE   & $0.727_{\pm 0.001}$  &$0.912_{\pm 0.001}$ &$0.721_{\pm 0.000}$                           \\\hline
SAT (directed GIN)&\multirow{1}{*}{MagLap-1q (q=0.1)} 
 & SPE     &  $0.124_{\pm 0.002}$& $0.433_{\pm 0.002}$       &  $0.043_{\pm 0.000}$                                            \\\Xhline{2.5\arrayrulewidth}
SAT (directed GIN)&\multirow{1}{*}{MagLap-Multi-q}
 & SPE      &   $0.016_{\pm 0.000}$        & $0.185_{\pm 0.036}$      &  $\bm{0.002_{\pm 0.000}}$               \\\Xhline{3\arrayrulewidth}
\end{tabular}}
\label{table-dist-sat}
\end{table}

\section{Discussion: Walk Profile Directly as Features}
\label{appendix-wp-features}
One may wonder why not directly compute walk profile as features as alternative to multiple $q$ PE. Walk profile itself is straightforward to compute and can serve as descriptors of graph bidirectional walk patterns, similar to random walks features on undirected graphs. However, the dimension of walk profile grows quadratically with walk length. For instance, when predicting the largest path distance on synthetic directed graphs, the longest path distance can be at most around 15, and thus we use 15 qs (we find that decreasing the number of q, e.g., to 10, will fail to fit distance greater than 10). In this case, the total dimension of the walk profile is about (16*15)/2=120. In contrast, multi-q PE (with only top K eigenvectors, K=32 in our example) usually yields a good approximation, as shown by the experimental results. 
Besides, we believe there are further potential advantages of multiple-q positional encodings. For instance, if there are some sparsity structures of walk profile (which is indeed observed in our experiments), it is possible to downsample a small number of q to recover the walk profile of large length. Finally, walk profiles are inherently features tied to node pairs, so it generally requires a GNN model to handle node-pair features and yields quadratic complexity. In contrast, positional encodings are node features that are more scalable, especially one may choose top K eigenvectors instead of full eigenvectors.

\section{Normalized Walk Profile}
\label{appendix-nwp}
We define normalized walk profile $\hat{\Phi}_{u,v}(\ell, k)$ by replacing every $\bm{A}$ with random walks $\bm{D}_{\text{total}}^{-1}\bm{A}$, where $[\bm{D}_{\text{total}}]_{u,u}$ is the total node degree of node $u$. As a result, $\hat{\Phi}_{u,v}(\ell, k)$ represents the probability of landing at node $v$ in a bidirectional random walk with $k$ forward edges,  starting from node $u$. Note that normalized walk profiles still are able to encode important structural/distance information, such as directed shortest/longest path distance.

The reason why considering normalized walk profile is to be consistent with the definition of normalized Laplacian $\hat{\bm{L}}=\bm{I}-\bm{D}_{\text{total}}^{-1/2}\bm{A}_q\bm{D}^{-1/2}_{\text{total}}$. The normalization nature of $\bm{\hat{L}}$ makes it more suitable to recover normalized walk profile instead of the unnormalized ones.
%\yinan{explain why RMSE of walk profile is not near to zero: using PE of $A_q$ can perfectly fit as suggested by theory, while PE of $L_q$ is more suitable for normalized walk profile}
%\pan{I do not think the main text has sufficient space to explain the above thing but we can put it in the appendix.}
%Technically, in practice we choose normalized Magnetic Laplacian $\bm{L}_q=\bm{I}-\bm{D}^{-1/2}\bm{A}_q\bm{D}^{-1/2}$ to construct positional encodings with a dependency on node degree $\bm{D}$. In this sense PE decomposed from $\bm{L}_q$ aims to recover the normalized walk profile\footnote{To be more specific, a normalized walk profile is defined by replacing every $\bm{A}$ in the definition of walk profile by $\bm{D}^{-1/2}\bm{A}\bm{D}^{-1/2}$.} (analogy to random walk matrix on undirected graphs) while PE decomposed from $\bm{A}_q$ aims to recover the (unnormalized) walk profile. 
%Indeed, we found that if we instead use positional encodings directly decomposed from Magnetic Adjacency matrix $A_q$ (no dependency on node degree), then it can perfectly fit (unnormalized) walk profile, as shown in the following table.

\end{document}